\documentclass[pmlr]{jmlr}

\usepackage{booktabs}
\usepackage{graphicx}
\usepackage{url}
\usepackage{amsmath}
\usepackage{algorithm2e}
\newtheorem{assumption}{Assumption}
\jmlrvolume{TBD}
\jmlryear{2026}
\jmlrworkshop{Probabilistic Graphical Models (PGM)}

\title{I-FLOP: Fast Learning of Order and Parents from Interventional Data}

\author{\Name{Liuting Chen}\textsuperscript{\textdagger} \Email{jnk758@alumni.ku.dk}\\
\AND
\Name{Alex Markham}\textsuperscript{\textdagger} \Email{alex.markham@causal.dev}\\
\addr \textsuperscript{\textdagger}Department of Mathematical Sciences\\
University of Copenhagen\\
Denmark
 }

\begin{document}

\maketitle

\begin{abstract}
	We extend the FLOP (fast learning of order and parents) algorithm recently developed by \citet{wienobst2026embracing} from observational to interventional data.
	In particular, we use the interventional BIC score of \citet{hauser2012characterization}, adapting it to be used with the iterative Cholesky-based score updates that are partly responsible for FLOP's speed.
	We show that, in the sample limit, I-FLOP recovers a DAG in the same interventional Markov equivalence class as the data-generating DAG.
	We compare I-FLOP to existing causal structure learning algorithms on simulated and real interventional data, where it performs favorably in terms of both performance and run time.
\end{abstract}

\begin{keywords}
	Causal Structure Learning, Interventional Data, Discrete Search
\end{keywords}

\section{Introduction}
With observational data, we can typically only determine a Markov equivalence class (MEC)~\citep{verma1990equivalence}---a set of causal graphs that cannot be distinguished from such data alone.
Interventions with known targets break some of these ambiguities, and with multiple interventional environments we obtain a finer equivalence class, the interventional Markov equivalence class (I-MEC)~\citep{hauser2012characterization}, which determines the direction of more edges than the MEC alone.

To learn causal structures from interventional data, researchers have proposed various algorithms.
These algorithms can be broadly categorized into three types based on how they utilize intervention information.
The first type is score-based methods, such as Greedy Interventional Equivalence Search (GIES)~\citep{hauser2012characterization}: by defining a scoring function that incorporates intervention information, candidate graph structures are scored, and the optimal solution is searched within the graph space.
The second type is permutation or ordering-based methods, such as Interventional Greedy Sparsest Permutation (I-GSP)~\citep{wang2017permutation}: using intervention invariance constraints to guide the inference of node order, followed by selecting parent sets under the order constraints.
The third type is invariance-based methods, such as Invariant Causal Prediction (ICP)~\citep{peters2016causal}, which leverage the principle of mechanism invariance where true causal mechanisms remain stable across multiple interventional environments while non-causal associations change. By identifying this invariance, causal relationships are pinpointed.

Among these approaches, ICP primarily tests invariance of predictive relationships for a specified response variable but is not designed as an interventional completed partially directed acyclic graph (I-CPDAG) search procedure.
We therefore focus on GIES and I-GSP, which are more directly aligned with our goal of learning a global graph or an equivalence-class representative under known intervention targets.
These two methods serve as the main reference points for our setting, but they face different challenges in practical use.

GIES uses a local greedy strategy to search the space of I-MEC, gradually adjusting the graph structure through three phases: forward, backward, and turning, which respectively consider score-improving local modifications of the current equivalence-class representative.
At each step within a phase, GIES selects the admissible local move that yields the largest improvement in the score.
While this greedy approach is computationally efficient, it does not guarantee convergence to the globally optimal structure~\citep{wang2017permutation}.
More critically, the candidate search at each step may have exponential complexity~\citep{hauser2012characterization} in the worst case, making it difficult to fully explore the search space in large-scale problems or scenarios with limited computational resources.

I-GSP uses conditional independence (CI) tests under observational and interventional distributions to guide its permutation search.
Its consistency result is asymptotic, whereas in finite samples the CI decisions can be sensitive to sample size, test choice, and significance threshold.

Motivated by these challenges, we adopt an order-based score-and-search approach for known-target interventional data.
Searching over node orderings provides an attractive alternative to direct structure search, as acyclicity is guaranteed by construction and large neighborhoods can be explored efficiently~\citep{bouckaert1992optimizing, singh1993algorithm, larranaga1996learning, friedman2003being, teyssier2005ordering, scanagatta2017improved}.
Building on this paradigm, FLOP~\citep{wienobst2026embracing} combines decomposable scoring with efficient reinsertion-based search and local parent-set updates.

We extend FLOP to known-target interventional data, yielding interventional FLOP (I-FLOP)\footnote{Code is available at \url{https://github.com/L2-T2/I-FLOP}}.
Our main contributions are threefold:
\begin{itemize}
	\item[1)] We introduce a target-filtered pooled local score that extends order-based search to multi-environment interventional data with known targets.
	\item[2)] We establish both local and global theoretical consistency guarantees: we prove that the target-filtered grow–shrink procedure consistently recovers the optimal prefix-restricted parent sets from any initialization, and that exact global minimization of our target-filtered pooled score asymptotically identifies the true I-MEC.
	\item[3)] We conduct synthetic and real-world experiments, showing that I-FLOP achieves competitive scalability and superior I-CPDAG accuracy compared with established interventional baselines, including GIES and I-GSP.
\end{itemize}

\section{Preliminaries}
Let $G=(V,E_{G})$ be a directed acyclic graph (DAG) with node set $V=\{1,\ldots,p\}$ and edge set $E_{G} \subsetneq \{i \rightarrow j \mid i,j \in V, i \neq j\}$.
We write $\operatorname{Pa}_{G}(v)$ for the parent set of node $v$.
A linear Gaussian structural equation model (SEM)~\citep{peters2014identifiability} associated with $G$ is given by \emph{structural equations} $X_v=\sum_{u\in\operatorname{Pa}_G(v)}W_{uv}X_u+\varepsilon_v$ where the noise variables $\varepsilon_v$ are jointly independent Gaussian random variables with $\varepsilon_v\sim\mathcal{N}(0,\sigma_v^2)$ and $\sigma_v^2>0$.

We consider multiple environments $\mathcal E=\{0,1,\ldots,m\}$ associated with a known intervention target family $\mathcal I=\{I_e:e\in\mathcal E\}$, where $I_0=\emptyset$ denotes the observational environment.
For a target set $I_e\subseteq V$, a known-target hard intervention replaces the structural equations of the targeted variables so that they are instead drawn jointly from an intervention distribution $Q_{I_e}^{(e)}$, i.e., $X_{I_e}^{(e)}\sim Q_{I_e}^{(e)}$, and leaves the structural equations of all non-targeted variables $v\in V \setminus I_e$ unchanged.

Every DAG can be associated to at least one topological order $\tau$ such that $u\to v\in E_{G}$ implies that $u$ precedes $v$ in $\tau$.
Given $\tau$, the candidate parent set $P_{v}$ of node $v$ consists only of nodes that precede $v$ in the order, which can be represented as $P_{v} \subseteq \operatorname{Pred}_{\tau}(v) = \{u \in V: u\; \text{precedes}\; v\; \text{in}\; \tau \}$.
The ordered Markov property states that $X_v \perp\!\!\!\perp X_{\operatorname{Pred}_\tau(v)\setminus \operatorname{Pa}_{G}(v)}\mid X_{\operatorname{Pa}_G(v)}$ for every $v \in V$.

Since the linear Gaussian SEM above has strictly positive noise variances, the induced joint Gaussian distributions are non-degenerate and hence positive.
Therefore, the ordered, local, and global Markov properties are equivalent in our setting~\citep{lauritzenGraphicalModels2026}.
A joint distribution $\mathbb{P}$ over $X_V$ is Markov with respect to $G$ if it satisfies these equivalent Markov properties.

Observational conditional independence relations do not generally identify a unique DAG: different DAGs may encode the same set of conditional independence relations and are therefore observationally indistinguishable.
Such DAGs form a Markov equivalence class (MEC), represented by a completed partially directed acyclic graph (CPDAG)~\citep{andersson1997characterization}.
Following \citet{hauser2012characterization}, two DAGs $G$ and $H$ are interventionally Markov equivalent with respect to the target family $\mathcal I$ if they induce the same interventional Markov model $M_{\mathcal I}(G)=M_{\mathcal I}(H)$.
The corresponding equivalence class is the I-MEC.
For the true DAG $G^\star$, we denote the true I-MEC by $\mathcal G^\star_{\mathcal I} = \{G\in\mathcal G_{\mathrm{DAG}}:M_{\mathcal I}(G)=M_{\mathcal I}(G^\star)\}$, where $\mathcal G_{\mathrm{DAG}}$ denotes the set of all DAGs on $V$.
The unique graphical representative of an I-MEC is the interventional essential graph, also called the I-CPDAG.

Let $\mathcal D=\{X^{(e)}\}_{e\in\mathcal{E}}$ denote the multi-environment dataset.
We propose I-FLOP as an extension of FLOP that retains FLOP's search framework for optimizing a decomposable score $S(G;\mathcal D)=\sum_{v\in V}s_v(\operatorname{Pa}_{G}(v);\mathcal D)$ over the space of topological orders (Algorithm~\ref{alg:iflop}).
Given a topological order $\tau$, each node $v$ selects its parent set $\operatorname{Pa}_{G}(v)$ only from its prefix $\operatorname{Pred}_{\tau}(v)$, thereby inducing a DAG $G$.
Candidate orders are explored by reinsertion moves (Algorithm~\ref{alg:reinsert-tf}).
During the evaluation of a reinsertion move, the affected parent sets and local scores are updated locally using a warm-start grow-shrink procedure (Algorithm~\ref{alg:growshrink-tf}), avoiding a full recomputation for all nodes.
This reinsertion-move and warm-start grow-shrink machinery is unchanged from FLOP; the component that adapts it to interventional data is the target-filtered pooled local score $s_v$, introduced next, which is the score these procedures optimize.
After the reinsertion-based local search reaches a local optimum, Iterated Local Search (ILS) applies a random perturbation to the current best order and restarts local search from the perturbed order; the best-scoring order found across these restarts is retained~\citep{lourencco2018iterated, de2003iterated}.

\section{I-FLOP: order-based search for interventional data}

\begin{algorithm2e}[H]
	\SetAlgoLined
	\DontPrintSemicolon
	\caption{Target-filtered order search for known-target interventional data}
	\label{alg:iflop}

	\KwIn{Multi-environment data $\{X^{(e)}\}_{e\in\mathcal{E}}$, target family $\mathcal{I}=\{I_e\}_{e\in\mathcal{E}}$, initial order $\tau_0$}
	\KwOut{Estimated DAG $\widehat{G}$ and its induced I-CPDAG $\widehat{C}_{\mathcal{I}}$}

	\BlankLine
	\tcp{Phase 1: Precompute environment statistics}
	\ForEach{$e \in \mathcal{E}$}{
		Center $X^{(e)}$ within environment $e$ and compute scatter matrix $C^{(e)}$\;
	}

	\BlankLine
	\tcp{Phase 2: Construct target-filtered pooled scores}
	\ForEach{$v \in V$}{
		$\mathcal{E}_v \leftarrow \{e \in \mathcal{E} : v \notin I_e\}$\;
		$N_v \leftarrow \sum_{e \in \mathcal{E}_v} n_e$\;
		$C_v^{\mathrm{pool}} \leftarrow \sum_{e \in \mathcal{E}_v} C^{(e)}$\;
		Define $s_v := s_v^{\mathrm{tf\text{-}pool}}$ from $C_v^{\mathrm{pool}}$ and $N_v$\;
	}

	\BlankLine
	\tcp{Phase 3: Initialize parent sets and local scores under $\tau_0$}
	$\tau \leftarrow \tau_0$\;
	\ForEach{$v \in V$}{
		$Z_v \leftarrow \operatorname{Pred}_{\tau}(v)$\;
		$\ell_{\emptyset,v} \leftarrow s_v(\emptyset)$\;
		$(P_v,\ell_v)
			\leftarrow
			\mathrm{GrowShrink}_{\mathrm{tf}}
			(v,Z_v,\emptyset,\ell_{\emptyset,v},0)$
		\tcp*[r]{Algorithm~\ref{alg:growshrink-tf}}
	}
	Let $P := (P_v)_{v\in V}$ and $\ell := (\ell_v)_{v\in V}$\;

	\BlankLine
	\tcp{Phase 4: Reinsertion-based local order search}
	$\mathrm{improved} \leftarrow \mathrm{true}$\;
	\While{$\mathrm{improved}$}{
		$(\tau^{\mathrm{old}},\ell^{\mathrm{old}})
			\leftarrow
			(\tau,\ell)$\;

		\ForEach{$v \in \tau^{\mathrm{old}}$}{
			$(\tau,P,\ell)
				\leftarrow
				\mathrm{Reinsert}_{\mathrm{tf}}(\tau,P,\ell,v)$
			\tcp*[r]{Algorithm~\ref{alg:reinsert-tf}}
		}

		$\mathrm{improved}
			\leftarrow
			\left(
			\sum_{v\in V}\ell_v
			<
			\sum_{v\in V}\ell_v^{\mathrm{old}}
			\right)$\;
	}

	\BlankLine
	\tcp{Phase 5: Graph construction and output}
	$\widehat{G}
		\leftarrow
		\bigl(
		V,
		\{u\to v : u\in P_v,\ v\in V\}
		\bigr)$\;

	$\widehat{C}_{\mathcal{I}}
		\leftarrow
		\operatorname{I\text{-}CPDAG}
		(\widehat{G},\mathcal{I})$\;

	\BlankLine
	\KwRet{$(\widehat{G},\widehat{C}_{\mathcal{I}})$}\;
\end{algorithm2e}

Algorithm~\ref{alg:iflop} presents the core reinsertion-based local order search. In our implementation, the optional ILS wrapper described above repeatedly perturbs the locally optimal order and reruns the reinsertion search, retaining the best-scoring solution across restarts.

We first state two assumptions for the known-target hard-intervention setting: one on the data-generating process and one on the conservativeness of the intervention target family.

\begin{assumption}[Data-generating process and causal sufficiency]
	\label{assump:data-generating process}

	We assume causal sufficiency over $V$ and a linear Gaussian SEM with true DAG $G^\star=(V,E)$.
	In each environment $e\in\mathcal{E}$, the known hard-intervention targets $I_e\subseteq V$ satisfy $X^{(e)}_{I_e}\sim Q^{(e)}_{I_e}$, while each non-targeted node $v\notin I_e$ retains its structural equation $X^{(e)}_v=\sum_{u\in\operatorname{pa}_{G^\star}(v)}W^\star_{uv}X^{(e)}_u+\varepsilon^{(e)}_v$, where $\varepsilon^{(e)}_v\sim\mathcal{N}(0,(\sigma^\star_v)^2)$.
	The noise variables are mutually independent, have positive variances, and are invariant across environments.
\end{assumption}

\begin{assumption}[Conservative family of targets]
	\label{assump:conservative intervention design}

	Following \citet{hauser2012characterization}, we assume the intervention target family $\mathcal{I}=\{I_e:e\in\mathcal{E}\}$ is conservative, meaning that for every node $v\in V$, there exists at least one environment $e\in\mathcal{E}$ such that $v \notin I_e$.
	Intuitively, this assumption ensures that no variable is perpetually perturbed across all experimental settings, so that the unperturbed, natural causal mechanism $P(v \mid \operatorname{pa}(v))$ of each variable can be observed and learned from at least one environment.
\end{assumption}

\subsection{Valid environments under interventions}

In the observational setting, FLOP scores each candidate parent set by fitting a single local regression on samples assumed to come from one observational regime.
This is no longer appropriate for interventional multi-environment data: different environments may induce different joint distributions, and samples from environments where node $v$ is directly intervened upon do not follow the original mechanism of $v$.
The invariant object is instead the local mechanism of $v$ across environments in which $v$ is not targeted.
This leads to response-target filtering:

\begin{definition}
	For each node $v\in V$, we define the valid environment set $\mathcal E_v := \{e\in\mathcal E: v\notin I_e\}$, which consists of the environments in which $v$ is not directly intervened upon, and the corresponding total sample size is $N_{v}:= \sum_{e\in\mathcal E_v} n_e$.
\end{definition}

We use $N$ to denote the total sample size over all environments.
The valid environment set $\mathcal E_v$ is node-specific: different response nodes may use different subsets of environments for local scoring.
Importantly, the validity of an environment for node $v$ depends only on whether $v$ itself is directly intervened upon.
In particular, let $P\subseteq V\setminus\{v\}$ be a candidate parent set for $v$.
If $u\in P$ is directly intervened upon in environment $e$, while $v\notin I_e$, then $e\in\mathcal E_v$ remains valid for scoring $v$.
By Assumption~\ref{assump:data-generating process}, intervening on $u$ may change the distribution of $X_u^{(e)}$, but it does not replace the structural equation of $v$ when $v\in V \setminus I_e$.
Therefore, observations from all environments in $\mathcal E_v$ can be pooled to estimate the same structural equation for $v$.
We next use these node-specific valid environments to define the target-filtered pooled local score.

\subsection{Target-filtered pooled local score}
\begin{definition}
	For node \(v\in V\) and candidate parent set \(P\subseteq V\setminus\{v\}\), define the target-filtered pooled local score by
	$$
		s_{v}^{\mathrm{tf\text{-}pool}}(P) = \frac{N_{v}}{2} \left( 1+\log\widehat\sigma^{2}_{v\mid P} \right) + \frac{1}{2}\log N\,(|P|+1).
	$$
	Here, $\widehat{\sigma}^{2}_{v\mid P}$ is estimated only from the valid environments $\mathcal{E}_{v}$.
\end{definition}

For each environment, the centered data and scatter matrix are
$$
	\widetilde X^{(e)} = X^{(e)} - \bar X^{(e)},\qquad
	C^{(e)} = (\widetilde X^{(e)})^\top \widetilde X^{(e)},
$$
and the pooled scatter matrix for node $v$ over its valid environments is $C_v^{\mathrm{pool}} = \sum_{e \in \mathcal E_v} C^{(e)}$. We impose the following non-singularity assumption on the relevant principal submatrices of $C_v^{\mathrm{pool}}$.

\begin{assumption}[Local non-singularity]
	\label{assump:local-nonsingularity}
	For every node $v\in V$ and every candidate parent set $P\subseteq V\setminus\{v\}$ considered in the local score evaluation, the principal submatrix $(C_v^{\mathrm{pool}})_{PP}$ is non-singular.
\end{assumption}
Under Assumption~\ref{assump:local-nonsingularity}, the pooled residual variance estimator $\widehat\sigma^{2}_{v\mid P}$ for node $v$ and candidate parent set $P$ is defined as
$$
	\widehat \sigma^2_{v\mid P} =
	\frac{1}{N_v}\Big[ (C_v^{\mathrm{pool}})_{vv} - (C_v^{\mathrm{pool}})_{vP} ((C_v^{\mathrm{pool}})_{PP})^{-1} (C_v^{\mathrm{pool}})_{Pv} \Big],
$$
where the second term is set to zero when $P=\emptyset$ (the Schur complement of the pooled scatter matrix; see the Frisch--Waugh--Lovell identity used in Appendix~\ref{app:proofs}).
Thus, $\widehat\sigma^2_{v\mid P}$ corresponds to the residual variance obtained by fitting a shared local linear mechanism for $v$ across its valid environments, rather than across all environments.

By construction, this score preserves node-wise decomposability and is compatible with order-restricted parent sets.
For a candidate DAG $G'$, the total score is
$$
	S_{\mathrm{tf\text{-}pool}}(G'; \mathcal D, \mathcal I) = \sum_{v \in V} s_v^{\mathrm{tf\text{-}pool}}(\mathrm{Pa}_{G'}(v)).
$$

Given a fixed order \(\tau\), score-based I-FLOP selects parent sets by
$$
	P_v(\tau) \in \operatorname*{arg\,min}_{P \subseteq \mathrm{Pred}_\tau(v)} s_v^{\mathrm{tf\text{-}pool}}(P),
$$
and induces the order-restricted DAG $G_\tau = \{ u \to v : u \in P_v(\tau) \}$.

The corresponding order objective is
$$
	Q_{\mathrm{score}}(\tau) = \sum_{v \in V} s_v^{\mathrm{tf\text{-}pool}}(P_v(\tau)).
$$

\subsection{Warm-start parent-set selection}

For each node $v$ and prefix $Z$, I-FLOP updates the candidate parent set using a grow–shrink procedure with the target-filtered pooled local score (Algorithm~\ref{alg:growshrink-tf}).\footnote{Proofs of all theoretical results in this section are provided in Appendix~\ref{app:proofs}.}
This strategy is inherited from FLOP, which recovers the restricted Markov boundary under a single observational distribution.
In our setting, however, valid environments preserve the response mechanism but may exhibit different joint distributions due to interventions on other nodes.
Consequently, the restricted Markov boundary characterization does not directly apply to multiple data sources.
To characterize the asymptotic objective optimized by the grow--shrink procedure, we introduce a pooled population residual risk over the valid environments.
The following assumption specifies the asymptotic sample proportions contributed by each environment.

\begin{assumption}[Environment proportions]
	\label{assump:env-proportions}
	For every environment \(e\in\mathcal E\), the sample proportion satisfies $n_e/N\to\pi_e$, where $\pi_e>0$.
	Consequently, for every node $v$, $N_v/N\to\alpha_v:=\sum_{e\in\mathcal E_v}\pi_e>0$.
\end{assumption}

For each environment $e\in\mathcal E$, let $\Sigma^{(e)} = \operatorname{Cov}\!\left(\widetilde X^{(e)}\right)$ denote the covariance matrix of the environment-centered variables.
Under Assumptions~\ref{assump:data-generating process}, \ref{assump:conservative intervention design}, and \ref{assump:env-proportions}, the normalized valid-environment pooled scatter matrix $N^{-1}_{v}C_v^{\mathrm{pool}}$ converges in probability to $\Sigma_v^{\mathrm{pool}} = \frac{1}{\alpha_v} \sum_{e\in\mathcal E_v} \pi_e\Sigma^{(e)}$ (by the weak law of large numbers applied within each valid environment, combined via Assumption~\ref{assump:env-proportions}'s sample-proportion limits).
For a candidate parent set $P\subseteq Z$, the corresponding population pooled residual risk is
$$
	\sigma^2_{v\mid P,\mathrm{pool}}
	=
	(\Sigma_v^{\mathrm{pool}})_{vv}
	-
	(\Sigma_v^{\mathrm{pool}})_{vP}
	\bigl(
	(\Sigma_v^{\mathrm{pool}})_{PP}
	\bigr)^{-1}
	(\Sigma_v^{\mathrm{pool}})_{Pv}.
$$
This quantity is the population counterpart of $\widehat{\sigma}^2_{v\mid P}$ and characterizes the asymptotic fit of the shared response mechanism across the valid environments.
It therefore induces the population target of the grow--shrink procedure:

\begin{definition}
	The pooled local parent target of $v$ relative to $Z$ is defined as the sparsest candidate parent set in $Z$ that gives the optimal valid-environment structural-mechanism fit:
	\[
		M_v^{\mathrm{pool}}(Z)
		\in
		\operatorname*{arg\,min}_{P\subseteq Z}
		\sigma^2_{v\mid P,\mathrm{pool}},
	\]
	where ties are resolved in favor of a sparsest minimizer.
\end{definition}
Although single-variable greedy search procedures could in principle be susceptible to local optima, the underlying linear Gaussian model and positive definiteness of the pooled covariance guarantee that the pooled residual-risk landscape has no local traps:

\begin{proposition}
	\label{prop:no-local-traps}
	Fix $v\in V$ and $Z\subseteq V\setminus\{v\}$.
	Under Assumptions~\ref{assump:data-generating process}, \ref{assump:conservative intervention design}, and \ref{assump:env-proportions}, the pooled residual risk criterion $R(P) = \sigma^2_{v\mid P,\mathrm{pool}}$ has a unique sparsest minimizer $M_v^{\mathrm{pool}}(Z) = \operatorname{supp}(w_Z)$, where $w_Z = (\Sigma_{ZZ}^{\mathrm{pool}})^{-1}\Sigma_{Zv}^{\mathrm{pool}}$.
	Furthermore, for every candidate set $P\subseteq Z$ with $M_v^{\mathrm{pool}}(Z)\nsubseteq P$, there exists $u\in M_v^{\mathrm{pool}}(Z)\setminus P$ satisfying $\sigma^2_{v\mid P\cup\{u\},\mathrm{pool}} < \sigma^2_{v\mid P,\mathrm{pool}}$.
\end{proposition}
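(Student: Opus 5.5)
Write $\Sigma:=\Sigma_v^{\mathrm{pool}}$ and $R(P)=\sigma^2_{v\mid P,\mathrm{pool}}$. The plan is to work entirely at the population level with this pooled covariance and to reduce both claims to properties of the least-squares projection of $X_v$ onto $X_Z$.

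\emph{Step 1: positive definiteness.} Every $\Sigma^{(e)}$ with $e\in\mathcal E_v$ is positive semidefinite. Since $\mathcal E_v\neq\emptyset$ by Assumption~\ref{assump:conservative intervention design}, the matrix $\Sigma$ is a positive combination of at least one such matrix. I will argue that $\Sigma_{Z\cup\{v\},Z\cup\{v\}}\succ 0$, so that all Schur complements below are well defined. This is the step I expect to be the main obstacle. The weak point is that $Q^{(e)}_{I_e}$ is arbitrary, so an interventional covariance could in principle be singular. I would first rely on the observational-type environments: non-targeted nodes receive fresh Gaussian noise with $\sigma_v^\star>0$. If that is not enough, I would state nondegeneracy of $\Sigma$ explicitly, as the population analogue of Assumption~\ref{assump:local-nonsingularity}, and note that the proposition tacitly uses it.

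\emph{Step 2: Pythagorean identity.} With $\Sigma\succ 0$, the map $R(P)=\min_{b\in\mathbb R^P}\mathbb E_\Sigma(X_v-b^\top X_P)^2$ is a quadratic minimization, and $R(Z)$ is attained uniquely at $w_Z=\Sigma_{ZZ}^{-1}\Sigma_{Zv}$. For any $P\subseteq Z$, the residual $X_v-w_Z^\top X_Z$ is $\Sigma$-orthogonal to $X_Z$. This gives
\[
R(P)=R(Z)+\min_{b\in\mathbb R^P}(w_Z-\tilde b)^\top\Sigma_{ZZ}(w_Z-\tilde b),
\]
where $\tilde b$ denotes $b$ extended by zeros to $Z$. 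Positive definiteness of $\Sigma_{ZZ}$ then implies that $R(P)=R(Z)$ if and only if some $\tilde b$ supported on $P$ equals $w_Z$, that is, if and only if $\operatorname{supp}(w_Z)\subseteq P$. Consequently the minimizers of $R$ over $P\subseteq Z$ are exactly the supersets of $M:=\operatorname{supp}(w_Z)$, and $M$ is the unique sparsest one. This proves the first claim.

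\emph{Step 3: no local traps.} Take $P\subseteq Z$ with $M\not\subseteq P$. Let $r_P=X_v-b_P^\top X_P$ be the population residual on $P$. For $u\in Z\setminus P$, the standard partial-regression identity gives
\[
R(P)-R(P\cup\{u\})=\frac{\operatorname{Cov}(r_P,X_u\mid X_P)^2}{\operatorname{Var}(X_u\mid X_P)},
\]
with all moments computed under $\Sigma$. The denominator is positive by Step 1, so it suffices to find $u\in M\setminus P$ with $\operatorname{Cov}(r_P,\tilde X_u)\neq 0$, where $\tilde X_u$ is the residual of $X_u$ on $X_P$.

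Suppose instead that this covariance vanishes for every $u\in M\setminus P$. Write $r_P=(w_Z-\tilde b_P)^\top X_Z+\varepsilon$, where $\varepsilon$ is orthogonal to $X_Z$. Then $\operatorname{Var}(r_P)-R(Z)=(w_Z-\tilde b_P)^\top\Sigma_{ZZ}(w_Z-\tilde b_P)$. The vector $d=w_Z-\tilde b_P$ has support in $M\cup P$, and its $P$-part is irrelevant after residualizing on $X_P$. Hence $\operatorname{Cov}(r_P,d^\top X_Z)=\sum_{u\in M\setminus P}d_u\operatorname{Cov}(r_P,\tilde X_u)=0$. On the other hand, the same covariance equals $d^\top\Sigma_{ZZ}d$. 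So $d^\top\Sigma_{ZZ}d=0$, which forces $d=0$ and $M\subseteq P$, a contradiction. Therefore some $u\in M\setminus P$ strictly decreases $R$.
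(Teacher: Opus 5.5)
Your proof is correct and follows essentially the paper's route (positive definiteness of the pooled covariance, minimizers of $R$ being exactly the supersets of $\operatorname{supp}(w_Z)$, then a contradiction via the partial-regression identity); your direct computation $d^\top\Sigma_{ZZ}d=\operatorname{Cov}(r_P,d^\top X_Z)=0$ simply replaces the paper's multivariate Frisch--Waugh--Lovell step $R(P\cup A)=R(P)$. The obstacle you flag in Step~1 closes without an extra assumption, and more carefully than the paper's blanket claim that every $\Sigma^{(e)}$ is positive definite: since $I_0=\emptyset$, environment $0$ lies in every $\mathcal E_v$, $\Sigma^{(0)}$ is positive definite as the covariance of a linear SEM with positive noise variances, and $\pi_0>0$, so $\Sigma_v^{\mathrm{pool}}\succeq(\pi_0/\alpha_v)\Sigma^{(0)}\succ 0$ however degenerate the $Q^{(e)}_{I_e}$ are.
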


Proposition~\ref{prop:no-local-traps} establishes that whenever the current candidate parent set $P$ does not fully cover the optimal parent set $M_v^{\mathrm{pool}}(Z)$, there always exists at least one missing parent whose addition strictly reduces the pooled residual risk.
Consequently, target-filtered grow--shrink provably converges to the global parent target from any initialization:

\begin{theorem}
	\label{thm:iflop-gs-consistency}
	Fix $v\in V$ and $Z\subseteq V\setminus\{v\}$.
	Let $\widehat{P}_{v,N}(Z)$ be the parent set returned by target-filtered grow--shrink using $s_v^{\mathrm{tf\text{-}pool}}$.
	Under Assumptions~\ref{assump:data-generating process}--\ref{assump:env-proportions}, starting from any initial parent set $P_0\subseteq Z$,
	$$
		\mathbb{P}\left(\widehat{P}_{v,N}(Z) = M_v^{\mathrm{pool}}(Z)\right) \longrightarrow 1 \quad \text{as } N \to \infty.
	$$
\end{theorem}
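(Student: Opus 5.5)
The plan is to reduce the finite-sample behaviour of grow--shrink to the population landscape from Proposition~\ref{prop:no-local-traps}, via a uniform separation argument over the finitely many subsets of $Z$. Write $\Delta_N(P,Q) := s_v^{\mathrm{tf\text{-}pool}}(Q) - s_v^{\mathrm{tf\text{-}pool}}(P)$. Then
\[
\Delta_N(P,Q) = \tfrac{N_v}{2}\log\bigl(\widehat\sigma^2_{v\mid Q}/\widehat\sigma^2_{v\mid P}\bigr) + \tfrac12\log N\,(|Q|-|P|).
\]
By the weak law of large numbers, $N_v^{-1}C_v^{\mathrm{pool}} \to \Sigma_v^{\mathrm{pool}}$ in probability. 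Because the Schur complement is continuous and $\Sigma^{\mathrm{pool}}_v$ is positive definite, $\widehat\sigma^2_{v\mid P}\to\sigma^2_{v\mid P,\mathrm{pool}}>0$ in probability jointly for all $P\subseteq Z$. Positive definiteness follows since every $\Sigma^{(e)}$ is positive definite (positive noise variances), and the pooled matrix is a convex combination of them. Together with $N_v/N\to\alpha_v>0$, this gives two regimes for a single-element move $Q=P\cup\{u\}$ or $Q=P\setminus\{u\}$:
\begin{itemize}
\item[(a)] If $\sigma^2_{v\mid Q,\mathrm{pool}}<\sigma^2_{v\mid P,\mathrm{pool}}$, then $\Delta_N(P,Q)\le -cN+O(\log N)$ with probability tending to one, so $Q$ is strictly preferred.
\item[(b)] If $\sigma^2_{v\mid Q,\mathrm{pool}}=\sigma^2_{v\mid P,\mathrm{pool}}$, the population fit is unchanged and the move is decided by the penalty.
\end{itemize}

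For case (b) I need the likelihood-ratio term to be $O_p(1)$. Take $P\subset Q=P\cup\{u\}$ with equal population risk. By the $\operatorname{supp}(w_Z)$ characterization, this means the population regression coefficient of $u$ given $P$ is zero. The quantity $N_v\log(\widehat\sigma^2_{v\mid P}/\widehat\sigma^2_{v\mid Q})$ is then $N_v$ times the squared sample partial correlation, which is $O_p(1)$ by $\sqrt{N}$-consistency of the pooled sample covariance (the CLT within each environment). The $\tfrac12\log N$ penalty therefore dominates, and the sparser set wins with probability tending to one.

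This step is the main obstacle. The pooled data are not i.i.d. from a single Gaussian, so I cannot quote a $\chi^2_1$ limit; I will only use $\widehat\Sigma-\Sigma=O_p(N^{-1/2})$ together with a second-order (delta-method) expansion of $\log$ of the Schur complement around a point where its gradient in the coefficient of $u$ vanishes. One subtlety needs care: under heterogeneous environments the pooled regression residual is not the structural noise. It is still a fixed population projection, so the expansion only needs moments, which are finite because $Q^{(e)}$ has finite second and fourth moments, implicit in the covariance convergence already assumed.

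Finally, I intersect the events from (a) and (b) over all finitely many pairs $(P,Q)$ with $P,Q\subseteq Z$ and $|P\triangle Q|=1$; this intersection has probability tending to one. On it, the sample score ordering of every single-element move agrees with the population ordering refined by sparsity.

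\textbf{Grow phase.} Starting from any $P_0$, suppose the current set $P$ satisfies $M:=M_v^{\mathrm{pool}}(Z)\not\subseteq P$. Proposition~\ref{prop:no-local-traps} supplies $u\in M\setminus P$ with strictly lower risk, so some addition strictly improves the score and grow cannot stop. The score strictly decreases at each step and there are finitely many sets, so grow terminates with some $P\supseteq M$.

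\textbf{Shrink phase.} Removing $u\in P\setminus M$ does not change the population risk, because any superset of $M$ attains the minimal risk (the regression on $P$ has zero weight off $M$). By (b), such a removal is therefore improving. Removing $u\in M$ strictly increases risk, since $M$ is the unique sparsest minimizer. By (a), this is rejected. Hence shrink removes exactly $P\setminus M$ and returns $M$.
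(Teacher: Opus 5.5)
Your architecture is the paper's. Your (a) and (b) are the two parts of Lemma~\ref{lem:tf-local-score-comparison}. The grow phase is closed by Proposition~\ref{prop:no-local-traps} together with (a), and the shrink phase deletes extraneous parents by (b) and keeps members of $M$ by (a).

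One small repair in the shrink step: ``$M$ is the unique sparsest minimizer'' does not by itself exclude a larger minimizer that omits part of $M$. Use instead that the minimizers are exactly the supersets of $M$, which follows from uniqueness of $w_Z$. Alternatively, as the paper does, apply the no-local-traps clause to $P\setminus\{u\}$, whose only missing element of $M$ is $u$.

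The real difficulty is the step you flagged, and it fails for a reason you did not anticipate. The within-environment CLTs put $N_v^{-1}C_v^{\mathrm{pool}}$ within $O_p(N^{-1/2})$ of the finite-sample mixture $\sum_{e\in\mathcal E_v}(n_e/N_v)\Sigma^{(e)}$, not of $\Sigma_v^{\mathrm{pool}}$. The remaining deterministic bias $\sum_{e\in\mathcal E_v}(n_e/N_v-\pi_e/\alpha_v)\Sigma^{(e)}$ has no rate under Assumption~\ref{assump:env-proportions}. This is harmless when the partial covariance behind case (b) vanishes for every mixture of the $\Sigma^{(e)}$. An example is $\operatorname{Pa}_{G^\star}(v)\subseteq P\subseteq Z$ with $Z$ containing no descendant of $v$: the residual is then $\varepsilon_v$, independent of $X_Z$ in every valid environment. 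But $M_v^{\mathrm{pool}}(Z)$ is defined for arbitrary $Z$, and its zeros can be cancellations that occur only at the limiting weights. In that case the sample partial correlation $\widehat\rho$ is of the order of the bias. Then $N_v\widehat\rho^{\,2}$ need not be $o_p(\log N)$, which is what beating the penalty requires.

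Concretely, take $v\to a\to u$ and $v\to u$ with $X_a=\beta X_v+\varepsilon_a$, $X_u=\gamma X_a+\delta X_v+\varepsilon_u$, $\gamma\beta=2$, $\delta=-1$. Use targets $I_0=\emptyset$ and $I_1=\{a\}$, both valid for $v$. Then $\Sigma^{(0)}_{vu}=\sigma_v^2$ and $\Sigma^{(1)}_{vu}=-\sigma_v^2$, so with $\pi_0=\pi_1=\tfrac12$ and $Z=\{u\}$ we get $M_v^{\mathrm{pool}}(Z)=\emptyset$. Now take $n_0=N/2+N^{3/4}$ and $n_1=N/2-N^{3/4}$, which Assumption~\ref{assump:env-proportions} allows. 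Then $N^{-1}(C_v^{\mathrm{pool}})_{vu}=2\sigma_v^2N^{-1/4}+O_p(N^{-1/2})$, so $N\widehat\rho^{\,2}\asymp N^{1/2}\gg\log N$. Hence grow--shrink returns $\{u\}$ with probability tending to one.

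So the statement needs extra hypotheses:
\begin{itemize}
\item a rate on the proportions, e.g.\ $n_e/N-\pi_e=O(N^{-1/2})$, or merely $o(\sqrt{\log N/N})$, which keeps the fit gain $o_p(\log N)$;
\item finite fourth moments of non-Gaussian $Q^{(e)}$. These are not implied by the covariance convergence you cite, since the weak law needs only second moments.
\end{itemize}
With both in place, your delta-method argument goes through. The paper's proof of the second part of Lemma~\ref{lem:tf-local-score-comparison} asserts the same $O_p(1)$ bound via a ``standard fixed-dimensional quadratic expansion'' and has the same hole. So you are not missing an idea the paper supplies, but neither argument establishes (b) under Assumptions~\ref{assump:data-generating process}--\ref{assump:env-proportions} alone.
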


Since a fixed order $\tau$ induces a fixed-prefix problem for each node, applying Theorem~\ref{thm:iflop-gs-consistency} node-wise immediately yields the following fixed-order recovery guarantee:

\begin{corollary}
	\label{cor:iflop-fixed-order}
	Fix an order $\tau$, and let $Z_v(\tau)=\operatorname{Pred}_\tau(v)$ for each $v\in V$. For each node $v$, let $\widehat{P}_{v,N}(\tau)$ be the parent set returned by target-filtered grow--shrink with prefix $Z_v(\tau)$, and define the induced graph $\widehat{G}_N(\tau) = \{u\to v: u\in\widehat{P}_{v,N}(\tau),\ v\in V\}$. Then, under Assumptions~\ref{assump:data-generating process}--\ref{assump:env-proportions},
	\[
		\mathbb{P}\left(\mathrm{Pa}_{\widehat{G}_N(\tau)}(v) = M_v^{\mathrm{pool}}(Z_v(\tau)) \text{ for all } v\in V\right) \longrightarrow 1 \quad \text{as } N \to \infty.
	\]
\end{corollary}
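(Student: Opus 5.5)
The corollary follows from Theorem~\ref{thm:iflop-gs-consistency} applied node by node, together with a finite union bound. Fix the order $\tau$. For each $v\in V$, the prefix $Z_v(\tau)=\operatorname{Pred}_\tau(v)$ is a fixed, deterministic subset of $V\setminus\{v\}$ that does not depend on $N$ or on the data. The target-filtered grow--shrink call for $v$ therefore has exactly the form covered by Theorem~\ref{thm:iflop-gs-consistency}, with $Z=Z_v(\tau)$ and an arbitrary fixed initialization $P_0\subseteq Z_v(\tau)$; in Algorithm~\ref{alg:iflop} this is $P_0=\emptyset$. The theorem gives, for each $v$ separately,
\[
	\mathbb{P}\bigl(\widehat P_{v,N}(\tau)\neq M_v^{\mathrm{pool}}(Z_v(\tau))\bigr)\longrightarrow 0 \quad\text{as } N\to\infty .
\]

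Next, check that the induced graph has the intended parent sets. By definition of $\widehat G_N(\tau)$, the edges pointing into $v$ are exactly $\{u\to v: u\in\widehat P_{v,N}(\tau)\}$. Each $\widehat P_{v,N}(\tau)$ lies in $\operatorname{Pred}_\tau(v)$, so $\widehat G_N(\tau)$ is acyclic with $\tau$ as a topological order. Moreover, the parent sets of different nodes do not interact. Hence $\mathrm{Pa}_{\widehat G_N(\tau)}(v)=\widehat P_{v,N}(\tau)$ holds identically, on every sample.

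The complement of the event in the statement is therefore $\bigcup_{v\in V}\{\widehat P_{v,N}(\tau)\neq M_v^{\mathrm{pool}}(Z_v(\tau))\}$. By the union bound, its probability is at most the sum of the $p$ individual failure probabilities. Since $p$ is fixed and each term tends to $0$, the sum tends to $0$, which proves the claim. No independence between the node-wise procedures is needed, even though they share data.

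I expect no step to be a real obstacle. The only points to verify are that the per-node prefixes are non-random, which holds because $\tau$ is fixed before the data are seen, and that the assumptions of Theorem~\ref{thm:iflop-gs-consistency} hold simultaneously for all $v$. The latter is immediate, since Assumptions~\ref{assump:data-generating process}--\ref{assump:env-proportions} are global and do not depend on the node.
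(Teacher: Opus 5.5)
Your proposal is correct and follows essentially the same route as the paper: apply Theorem~\ref{thm:iflop-gs-consistency} to each node with its fixed prefix $Z_v(\tau)$, combine the finitely many events (your explicit union bound is what the paper compresses into ``since $V$ is finite''), and note that the parent sets of $\widehat G_N(\tau)$ are by construction the returned sets $\widehat P_{v,N}(\tau)$.
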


Theorem~\ref{thm:iflop-gs-consistency} and Corollary~\ref{cor:iflop-fixed-order} establish the large-sample behavior of the target-filtered grow--shrink updates for a fixed prefix and a fixed order, respectively.
These results justify the local parent-set selection used in I-FLOP.
We next show that the target-filtered pooled score admits a valid likelihood interpretation under known interventions and use this connection to state its global recovery property.

\subsection{Global recovery from the target-filtered pooled score}
Under known hard interventions, the replacement mechanism of a directly targeted node does not depend on its candidate parent set in the original structural mechanism and therefore contributes only a graph-independent nuisance term.
After profiling out environment-specific intercepts, equivalently by working with the environment-centered observations, let
\[
	\ell_N(G,\theta_G) = \sum_{v\in V} \sum_{e\in\mathcal E_v} \ell_{v,e} \bigl( \operatorname{Pa}_G(v);\theta_v \bigr),
\]
denote the resulting log-likelihood contribution of the original structural mechanisms, evaluated only over the valid environments of each response node.
Here, $\theta_G=\{\theta_v:v\in V\}$, where $\theta_v$ contains the regression coefficients associated with $\operatorname{Pa}_G(v)$ and the residual variance of the original mechanism of node $v$.

\begin{lemma}
	\label{lem:likelihood-compatibility}
	Under Assumptions~\ref{assump:data-generating process}--\ref{assump:local-nonsingularity}, the target-filtered pooled graph score is the decomposable score associated with $\ell_N(G,\theta_G)$, up to terms independent of the candidate DAG.
	In particular, for any candidate DAG $G'$,
	\[
		S_{\mathrm{tf\text{-}pool}}(G';\mathcal D,\mathcal I) = - \sup_{\theta_{G'}}\ell_N(G',\theta_{G'}) + \frac{1}{2}\log N \sum_{v\in V} \bigl( |\operatorname{Pa}_{G'}(v)| + 1 \bigr) + C_N.
	\]
	where $C_N = - \frac{1}{2} \sum_{v\in V} N_v\log(2\pi)$ does not depend on $G'$.
\end{lemma}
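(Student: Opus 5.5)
The plan is to compute the profile log-likelihood node by node and show that each node's contribution reproduces $s_v^{\mathrm{tf\text{-}pool}}$, up to the BIC penalty and a constant. First I would record that $\theta_G=\{\theta_v\}_{v\in V}$ consists of blocks that appear in disjoint summands of $\ell_N(G,\theta_G)$. Each $\theta_v=(w_v,\sigma_v^2)$ enters only $\sum_{e\in\mathcal E_v}\ell_{v,e}$. Hence
\[
\sup_{\theta_{G'}}\ell_N(G',\theta_{G'})=\sum_{v\in V}\sup_{\theta_v}\sum_{e\in\mathcal E_v}\ell_{v,e}\bigl(\operatorname{Pa}_{G'}(v);\theta_v\bigr),
\]
and it suffices to treat one node $v$ with candidate parent set $P=\operatorname{Pa}_{G'}(v)$. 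The replacement mechanisms of targeted nodes are excluded by construction: they contribute only graph-independent nuisance terms, as stated before the lemma.

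For fixed $v$ and $P$, I would write the Gaussian conditional log-likelihood over the $N_v$ valid observations:
\[
-\tfrac{N_v}{2}\log(2\pi\sigma_v^2)-\tfrac{1}{2\sigma_v^2}\sum_{e\in\mathcal E_v}\sum_i\bigl(x^{(e)}_{i,v}-a_e-x^{(e)}_{i,P}w_v\bigr)^2 .
\]
Here $a_e$ are environment-specific intercepts. I would profile the $a_e$ out first. By Frisch--Waugh--Lovell (equivalently, minimising over each $a_e$ separately), this replaces every variable by its within-environment centred version $\widetilde X^{(e)}$. The residual sum of squares then depends on the data only through the pooled scatter matrix $C_v^{\mathrm{pool}}=\sum_{e\in\mathcal E_v}C^{(e)}$.

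Next I would minimise over $w_v$. Assumption~\ref{assump:local-nonsingularity} guarantees that $(C_v^{\mathrm{pool}})_{PP}$ is invertible, so the least-squares solution is unique. Its minimal residual sum of squares is the Schur complement
\[
(C_v^{\mathrm{pool}})_{vv}-(C_v^{\mathrm{pool}})_{vP}\bigl((C_v^{\mathrm{pool}})_{PP}\bigr)^{-1}(C_v^{\mathrm{pool}})_{Pv}=N_v\widehat\sigma^2_{v\mid P}.
\]
Maximising over $\sigma_v^2$ then gives $\sigma_v^2=\widehat\sigma^2_{v\mid P}$. The profile value is
\[
-\tfrac{N_v}{2}\log(2\pi)-\tfrac{N_v}{2}\bigl(1+\log\widehat\sigma^2_{v\mid P}\bigr).
\]
Therefore
\[
\tfrac{N_v}{2}\bigl(1+\log\widehat\sigma^2_{v\mid P}\bigr)=-\sup_{\theta_v}\sum_{e\in\mathcal E_v}\ell_{v,e}-\tfrac{N_v}{2}\log(2\pi).
\]
Adding the penalty $\tfrac12\log N(|P|+1)$ and summing over $v$ gives the claimed identity with $C_N=-\tfrac12\sum_v N_v\log(2\pi)$. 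This constant does not depend on $G'$, because $N_v$ depends only on $\mathcal I$.

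The main obstacle I expect is the intercept-profiling step. Interventions on parents of $v$ shift the environment means of $X_P$, so pooling uncentred data with one common intercept would not give $C_v^{\mathrm{pool}}$. I would handle this by applying Frisch--Waugh--Lovell with the block of environment indicator columns. The projection onto their orthogonal complement is exactly within-environment centring, and after centring the cross-environment scatter contributions vanish, leaving the block sum $\sum_e C^{(e)}$. I would also remark that the estimator's denominator is $N_v$ rather than $N_v$ minus the number of intercepts. This is consistent with the maximum-likelihood profile, so no correction term arises.
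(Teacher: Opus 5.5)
Your proposal is correct and follows essentially the same route as the paper. You decompose the profile likelihood node by node, maximise over the regression coefficients to get the pooled least-squares residual sum of squares $N_v\widehat\sigma^2_{v\mid P}$, then over $\sigma_v^2$, and sum with the BIC penalty to obtain $C_N=-\tfrac12\sum_{v}N_v\log(2\pi)$. The one difference is that you derive the within-environment centring explicitly, by profiling out environment-specific intercepts via Frisch--Waugh--Lovell, whereas the paper builds the centring into the definition of $\ell_N$ and starts directly from $\widetilde X^{(e)}$.
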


We now characterize the global target of exact score minimization in terms of I-MEC.
For a candidate DAG $G$, define the population counterpart of the graph-dependent fit component of the target-filtered pooled score as
\[
	L_\infty(G) = \sum_{v\in V} \frac{\alpha_v}{2} \log \sigma^2_{v\mid\operatorname{Pa}_G(v),\mathrm{pool}}, \qquad G\in\mathcal G_{\mathrm{DAG}}.
\]
Let
\[
	\mathcal A = \operatorname*{arg\,min}_{G\in\mathcal G_{\mathrm{DAG}}} L_\infty(G), \qquad \mathcal A_{\min} = \operatorname*{arg\,min}_{G\in\mathcal A} |E_G|.
\]
The restriction to $\mathcal A_{\min}$ corresponds to the graph-dependent complexity term
\[
	\sum_{v\in V} \bigl( |\operatorname{Pa}_G(v)|+1 \bigr) = |E_G|+|V|.
\]

By definition, $\mathcal A$ contains the DAGs attaining the optimal population fit of the valid-environment structural mechanisms, and $\mathcal A_{\min}$ retains the sparsest DAGs among these representations.
To conclude that the members of $\mathcal A_{\min}$ belong to the true $\mathcal I$-MEC, we require the conditional independence relations induced by the known intervention targets to coincide with those encoded by the true DAG $G^\star$.
We therefore impose the following interventional faithfulness assumption.

\begin{assumption}[Interventional faithfulness assumption]
	\label{assump:interventional faithfulness assumption}

	\noindent Let $\mathcal G_{\mathcal I}^{\star} := \left\{ G\in \mathcal G_{\mathrm{DAG}}: M_{\mathcal I}(G) = M_{\mathcal I}(G^\star) \right\}$ denote the true I-MEC.
	The interventional distribution family generated by $G^\star$ is faithful to $G^\star$ with respect to the known intervention target family $\mathcal I$.
\end{assumption}

\begin{theorem}
	\label{thm:global-imec-identifiability}
	Under Assumptions~\ref{assump:data-generating process}, \ref{assump:conservative intervention design}, \ref{assump:local-nonsingularity}, \ref{assump:env-proportions} and \ref{assump:interventional faithfulness assumption}, the sparsest global residual-risk minimizers belong to the true I-MEC:
	\[
		\mathcal A_{\min}
		\subseteq
		\mathcal G_{\mathcal I}^{\star}
		=
		\left\{
		G\in\mathcal G_{\mathrm{DAG}}:
		M_{\mathcal I}(G)=M_{\mathcal I}(G^\star)
		\right\}.
	\]
\end{theorem}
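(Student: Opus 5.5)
The plan has three parts: compute the minimum value of $L_\infty$, show that every sparsest minimizer satisfies the interventional Markov property, and then use faithfulness together with a minimal edge count to conclude I-Markov equivalence.

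\textbf{Step 1: the minimum value of $L_\infty$.}
For every $v$ and every $P\subseteq V\setminus\{v\}$, the pooled residual risk $\sigma^2_{v\mid P,\mathrm{pool}}$ is at least $(\sigma^\star_v)^2$. To see this, write the pooled risk as the $\alpha_v^{-1}\pi_e$-weighted average over $e\in\mathcal E_v$ of the within-environment mean squared error of the pooled best linear predictor. In each valid environment $e\in\mathcal E_v$, Assumption~\ref{assump:data-generating process} gives $X_v^{(e)}=W^{\star\top}_{\mathrm{pa}(v),v}X^{(e)}_{\mathrm{pa}(v)}+\varepsilon_v^{(e)}$. Here $\varepsilon^{(e)}_v$ is independent of all non-descendants of $v$, including intervened ancestors, since hard interventions replace the mechanisms of targets and do not touch $\varepsilon_v$. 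Hence any linear predictor of $X_v^{(e)}$ from $X_P^{(e)}$ has MSE at least $(\sigma^\star_v)^2$ whenever $P$ consists of non-descendants.

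For general $P$ the bound $\ge(\sigma_v^\star)^2$ may fail, because descendants can carry information about $\varepsilon_v$. So instead of a node-wise bound I will compare totals. For any DAG $G$, the quantity $\sum_v \alpha_v\log\sigma^2_{v\mid \mathrm{Pa}_G(v)}$ is the population cross-entropy, up to constants and the nuisance terms of targeted nodes, between the true interventional family and the best Gaussian interventional model on $G$ with the targets $\mathcal I$. This follows from the population version of Lemma~\ref{lem:likelihood-compatibility}: divide $\ell_N$ by $N$ and let $N\to\infty$ using Assumption~\ref{assump:env-proportions}. Gibbs' inequality then shows that $L_\infty(G)\ge L_\infty(G^\star)$. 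Equality holds if and only if the true interventional family lies in the Gaussian interventional model $M_{\mathcal I}(G)$ under the shared-mechanism constraint, that is, iff every valid-environment distribution factorizes according to $G$ with invariant conditionals $X_v\mid X_{\mathrm{Pa}_G(v)}$ across $\mathcal E_v$. In particular $G^\star\in\mathcal A$. Assumption~\ref{assump:conservative intervention design} guarantees $\alpha_v>0$, so every node contributes.

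\textbf{Step 2: members of $\mathcal A$ are I-Markov.}
The equality case says that every $G\in\mathcal A$ has the true interventional family $\{P^{(e)}\}$ in $M_{\mathcal I}(G)$. By the characterization of \citet{hauser2012characterization}, this means each $P^{(e)}$ is Markov to the intervention graph $G^{(I_e)}$, and the conditionals of non-targeted nodes are shared across environments. I expect this step to be the main obstacle. It requires justifying that the pooled linear projection being optimal forces exact invariance of the conditionals, not merely optimality of a pooled average. I would handle it by Jensen/Gibbs applied per environment: the pooled Gaussian KL is a sum of per-environment KLs, and these can all vanish only if the single pooled regression is the true conditional in every valid environment.

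\textbf{Step 3: faithfulness and the sparsity tie-break.}
By Assumption~\ref{assump:interventional faithfulness assumption}, the family generated by $G^\star$ satisfies exactly the interventional CI and invariance statements of $G^\star$. Since this family lies in $M_{\mathcal I}(G)$, every such statement implied by $G$ holds in it, and hence is implied by $G^\star$. This gives $M_{\mathcal I}(G^\star)\subseteq M_{\mathcal I}(G)$, i.e.\ $G$ is an I-map of $G^\star$. For $G\in\mathcal A_{\min}$ we also have $|E_G|\le|E_{G^\star}|$, because $G^\star\in\mathcal A$. An interventional I-map has at least as many edges as the true graph: its skeleton must contain that of $G^\star$, since faithfulness makes every adjacency of $G^\star$ non-separable. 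Therefore the skeletons coincide.

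Equal skeletons together with $M_{\mathcal I}(G^\star)\subseteq M_{\mathcal I}(G)$ force identical v-structures and identical orientations of edges incident to targets. Otherwise some interventional CI or invariance statement would hold in $G$ but not in $G^\star$, contradicting faithfulness. The interventional Markov equivalence characterization (same skeleton, v-structures, and $\mathcal I$-intervention graphs) then gives $M_{\mathcal I}(G)=M_{\mathcal I}(G^\star)$, which proves $\mathcal A_{\min}\subseteq\mathcal G^\star_{\mathcal I}$.
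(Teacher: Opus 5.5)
Your argument is correct and has the same three-part structure as the paper's proof:
\begin{enumerate}
\item $G^\star\in\mathcal A$;
\item every member of $\mathcal A$ has zero population KL loss and is therefore a causal independence map;
\item faithfulness plus edge-minimality yields I-Markov equivalence.
\end{enumerate}
Where you differ is that you fill in two steps the paper handles by assertion or by citation.

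\textbf{The KL step.} You rightly abandon the node-wise bound $\sigma^2_{v\mid P,\mathrm{pool}}\ge(\sigma^\star_v)^2$, which fails when $P$ contains descendants of $v$. You instead write $L_\infty$ as a $\pi_e$-weighted sum of per-environment cross-entropies. Gibbs' inequality then gives $L_\infty(G)\ge L_\infty(G^\star)$, and equality forces every per-environment KL divergence to vanish. The paper only states that a misspecified model ``would incur a strictly positive'' KL loss.

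\textbf{The minimum-map step.} The paper invokes Hauser and B\"uhlmann (2015, Remark~1) to identify minimum causal independence maps with the I-MEC; you reprove that fact. Adjacency faithfulness in the observational environment $I_0=\emptyset$ gives skeleton containment, the edge count gives equal skeletons, and the graphical characterization of Hauser and B\"uhlmann (2012) finishes. This is more self-contained and shows which parts of faithfulness are used.

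However, your claim that equal skeletons together with $M_{\mathcal I}(G^\star)\subseteq M_{\mathcal I}(G)$ force equal v-structures and equal orientations at targets is only asserted. The v-structure half is a routine separating-set argument. The orientation half is not a pure d-separation matter. Take $G^\star\colon u\to t$ with $\mathcal I=\{\emptyset,\{t\}\}$. The reversed graph $t\to u$ implies no d-separation in either environment. It is excluded only by the invariance of $X_u\mid X_t$ across the two environments, which fails because $u$ and $t$ are independent under the intervention. That is exactly where the invariance part of faithfulness, and the content of the paper's citation, enters.

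\textbf{A caveat shared with the paper.} Each $\Sigma^{(e)}$ is environment-centred, so equality in $L_\infty$ certifies invariance only of regression slopes and residual variances, not intercepts. The true family therefore lies in a location-relaxed version of $M_{\mathcal I}(G)$, not in $M_{\mathcal I}(G)$ itself. Your Step~3 survives once phrased for that relaxed model, because the statements it actually uses (within-environment partial correlations and slope/variance invariances) are covariance-level.
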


Theorem~\ref{thm:global-imec-identifiability} is a global statement.
For the empirical score, uniform convergence of the target-filtered residual risks and the BIC sparsity penalty imply that an exact global minimizer selects a DAG in $\mathcal A_{\min}$ in the limit $N\to\infty$.
Combining this with $\mathcal A_{\min}\subseteq\mathcal G_{\mathcal I}^\star$ gives the following I-CPDAG recovery result:

\begin{corollary}
	\label{cor:exact-icpdag-recovery}
	Let $\widehat G_N$ be an exact global minimizer of $S_{\mathrm{tf\text{-}pool}}(G;\mathcal D,\mathcal I)$ over DAG $G$.
	Under Assumptions~\ref{assump:data-generating process}, \ref{assump:conservative intervention design}, \ref{assump:local-nonsingularity}, \ref{assump:env-proportions}, and \ref{assump:interventional faithfulness assumption}, $\widehat G_N$ belongs to the true I-MEC with probability tending to one.
	If the target family $\mathcal I$ fully identifies the DAG, so that $\mathcal G_{\mathcal I}^\star=\{G^\star\}$, then $\Pr(\widehat G_N=G^\star)\to1$.
\end{corollary}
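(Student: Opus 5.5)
The proof reduces Corollary~\ref{cor:exact-icpdag-recovery} to Theorem~\ref{thm:global-imec-identifiability} by the standard BIC consistency argument over the finite set $\mathcal G_{\mathrm{DAG}}$. Since there are finitely many DAGs, it suffices to show that for every fixed $G\notin\mathcal A_{\min}$ and every $H\in\mathcal A_{\min}$ we have $\Pr\bigl(S_{\mathrm{tf\text{-}pool}}(G)>S_{\mathrm{tf\text{-}pool}}(H)\bigr)\to1$. A union bound then gives $\Pr(\widehat G_N\in\mathcal A_{\min})\to1$. Combining this with $\mathcal A_{\min}\subseteq\mathcal G^\star_{\mathcal I}$ yields the first claim. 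If $\mathcal G^\star_{\mathcal I}=\{G^\star\}$, then $\mathcal A_{\min}$ is nonempty and contained in $\{G^\star\}$, so $\widehat G_N=G^\star$ with probability tending to one.

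The first ingredient is consistency of the residual variances. By the weak law within each valid environment and Assumption~\ref{assump:env-proportions}, $N_v^{-1}C_v^{\mathrm{pool}}\to\Sigma_v^{\mathrm{pool}}$ in probability. The Schur complement is a continuous function of the matrix on the positive-definite cone, and $\Sigma_v^{\mathrm{pool}}$ is positive definite because the noise variances are positive. Hence $\widehat\sigma^2_{v\mid P}\to\sigma^2_{v\mid P,\mathrm{pool}}$ in probability for every $(v,P)$, uniformly since there are finitely many pairs. Dividing the score by $N$, and using $N_v/N\to\alpha_v$, gives
\[
N^{-1}S_{\mathrm{tf\text{-}pool}}(G)\to L_\infty(G)+\textstyle\sum_v\alpha_v/2,
\]
because the penalty is $O(\log N)=o(N)$. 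Therefore, if $G\notin\mathcal A$, the score gap to any $H\in\mathcal A$ is of order $cN$ with $c>0$, and $G$ loses with probability tending to one.

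The main obstacle is the case $G\in\mathcal A\setminus\mathcal A_{\min}$. Here $L_\infty(G)=L_\infty(H)$, but $|E_G|>|E_H|$, so the penalty favours $H$ by at least $\tfrac12\log N$. It remains to show that the fit difference
\[
\textstyle\sum_v \tfrac{N_v}{2}\bigl(\log\widehat\sigma^2_{v\mid \mathrm{Pa}_G(v)}-\log\widehat\sigma^2_{v\mid\mathrm{Pa}_H(v)}\bigr)
\]
is $O_p(1)$, or at least $o_p(\log N)$. To do this, I would relate both graphs to $G^\star$. Arguing as in the proof of Theorem~\ref{thm:global-imec-identifiability}, optimal population fit forces each local regression of a minimizer to be correctly specified for the pooled valid-environment distribution, meaning the population residual equals the true mechanism noise. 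I would compare each of $G$ and $H$ to a common supergraph model, or directly to $G^\star$, via Lemma~\ref{lem:likelihood-compatibility}. When both models contain the true pooled conditional structure, the profile log-likelihood ratio between them is a nested Gaussian likelihood-ratio statistic. Under the correctly specified pooled linear model with fixed-dimensional parameters, this statistic is $O_p(1)$ by standard Wilks-type arguments, using root-$N$ consistency of the scatter matrices. The delicate point is that pooled regressions across environments with different parent distributions must still have a common true coefficient vector. This holds because the mechanism of $v$ is invariant on $\mathcal E_v$ (Assumption~\ref{assump:data-generating process}), so the residual is exactly $\varepsilon_v$ regardless of how the parents' distributions shift.

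With the fit difference $O_p(1)$ and the penalty gap growing like $\log N$, $H$ beats $G$ with probability tending to one. This completes the pairwise comparisons and hence the corollary.
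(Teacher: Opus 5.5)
Your proposal follows the paper's proof step for step. An order-$N$ fit gap rules out DAGs outside $\mathcal A$. An $O_p(1)$ fit difference against a penalty gap of at least $\tfrac12\log N$ rules out $\mathcal A\setminus\mathcal A_{\min}$. Finiteness of $\mathcal G_{\mathrm{DAG}}$ gives uniformity, and Theorem~\ref{thm:global-imec-identifiability} places $\mathcal A_{\min}$ inside the I-MEC.

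One correction to your sketch of the $O_p(1)$ step, which the paper also only asserts. For $H\in\mathcal A$ other than $G^\star$ (e.g.\ $H$ reversing a covered edge of $G^\star$ that no target orients), the residual of $X_v$ on $\mathrm{Pa}_H(v)$ is generally not $\varepsilon_v$. Also, two members of $\mathcal A$ need not be nested or share a DAG supergraph (take $1\to 2$ and $2\to 1$). So the common coefficient vector across $\mathcal E_v$ should come from $L_\infty(H)=L_\infty(G^\star)$ forcing zero Kullback--Leibler loss in every environment. Then each maximized log-likelihood differs from the true log-likelihood by a point-null Wilks statistic, which is $O_p(1)$, and hence so does the difference between any two members of $\mathcal A$.
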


\section{Experiments}
In this section, we evaluate I-FLOP for interventional causal discovery.
We compare it with two external baselines, GIES and I-GSP, and two FLOP-based variants: FLOP-obs, the original observational method, and FLOP-envwise, which uses environment-wise variance estimation without target-based environment filtering.
We generate synthetic data by Erd\H{o}s--R\'enyi (ER) DAGs~\citep{erdos1959pmd} under linear Gaussian SEMs.
Each dataset contains one observational environment and multiple known-target hard interventional environments.

The experiment can be divided into three main parts.
First, we assess finite-sample convergence, specifically whether the algorithms recover the I-CPDAG (Section~\ref{finite sample}).
Second, we evaluate the algorithms' performances under well-specified synthetic settings (Section~\ref{main comparison}).
Finally, we demonstrate performance on real interventional data from the Causal Chamber Light Tunnel system (Section~\ref{causalchamber}).

\subsection{Finite sample convergence toward the asymptotic guarantee}
\label{finite sample}
We first investigate the finite-sample convergence behavior of I-FLOP as the sample size increases.
Our theoretical results establish that, in the large-sample limit, optimizing the target-filtered pooled score recovers the true I-CPDAG.
This experiment therefore examines whether the structural estimates produced by I-FLOP become increasingly accurate and stable with more data.
Figure~\ref{fig:consistency} shows that I-FLOP recovers the I-CPDAG in large sample limit, which supports our claim.

\begin{figure}[t]
	\centering
	\includegraphics[width=0.7\linewidth]{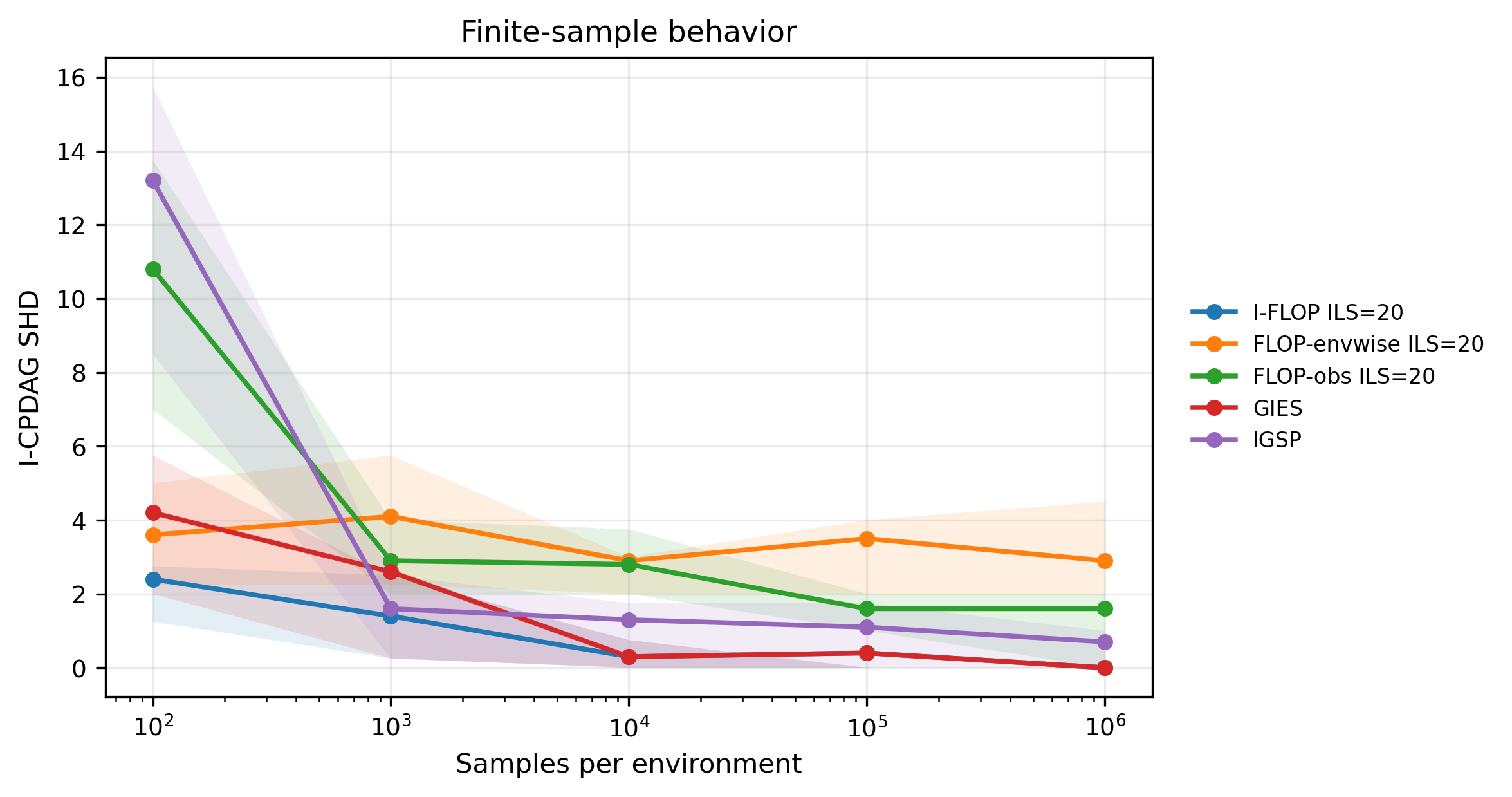}
	\caption{I-CPDAG SHD decreases as sample size increases from $10^2$ to $10^6$ per environment with 20 nodes and expected degree 2 for one observational and five known-target singleton hard intervention environments.}
	\label{fig:consistency}
\end{figure}

\subsection{Main accuracy and runtime comparison}
\label{main comparison}
\begin{figure}[t]
	\centering
	\includegraphics[width=\textwidth]{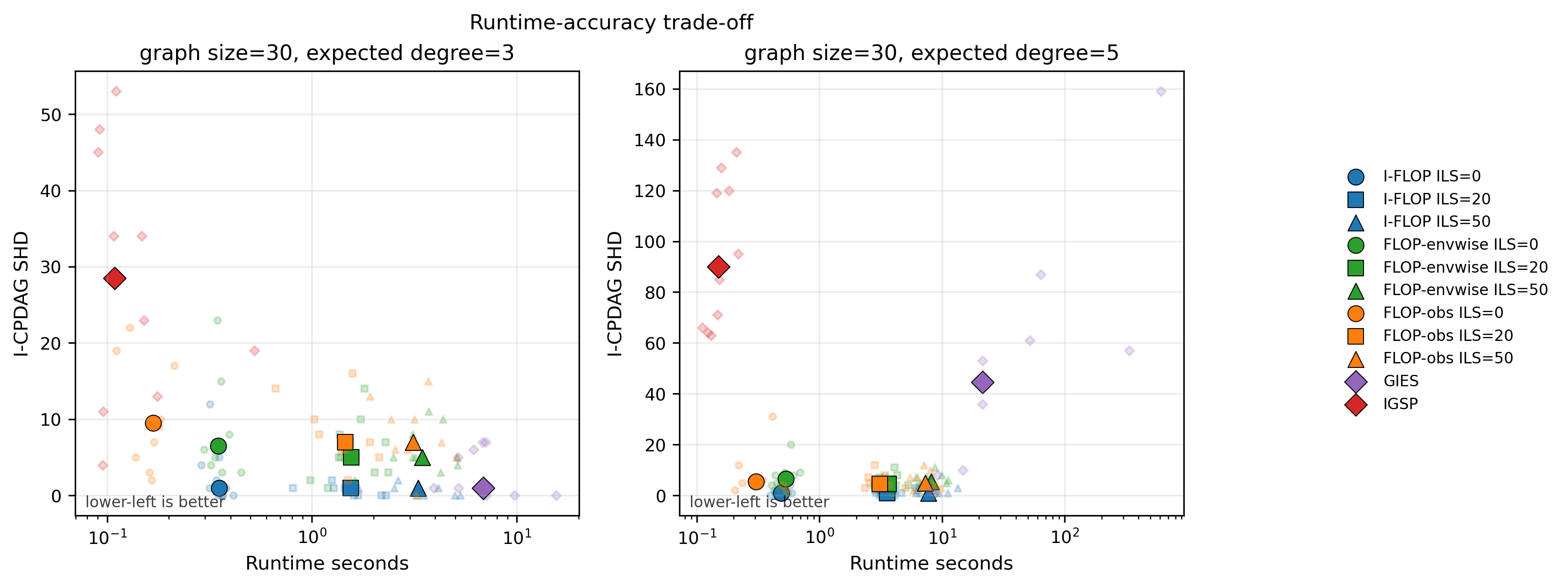}
	\caption{The I-CPDAG Structural Hamming Distance (SHD) is plotted against runtime for linear-Gaussian ER DAGs.
		Each setting uses one observational environment and five known-target singleton hard intervention environments, with $1{,}000$ samples per environment.}
	\label{fig:simulation}
\end{figure}

\begin{table}[ht]
	\centering
	\caption{SHD decomposition of experiment in well-specified synthetic setting}
	\label{tab:simulation}
	\setlength{\tabcolsep}{4pt}
	\small
	\begin{tabular}{lrrrrrrrr}
		\toprule
		                & \multicolumn{4}{c}{Expected degree $d=3$}
		                & \multicolumn{4}{c}{Expected degree $d=5$}                           \\
		\cmidrule(lr){2-5} \cmidrule(lr){6-9}
		Algorithm
		                & Mean                                      & Std.  & Skel. & Orient.
		                & Mean                                      & Std.  & Skel. & Orient. \\
		\midrule
		I-FLOP (ILS=0)  & 2.50                                      & 3.78  & 1.30  & 1.20
		                & 1.60                                      & 1.58  & 0.90  & 0.70    \\
		I-FLOP (ILS=20) & 0.70                                      & 0.67  & 0.60  & 0.10
		                & 1.50                                      & 1.27  & 1.10  & 0.40    \\
		I-FLOP (ILS=50) & 0.70                                      & 0.67  & 0.60  & 0.10
		                & 1.50                                      & 1.27  & 1.10  & 0.40    \\
		GIES            & 2.70                                      & 3.13  & 2.00  & 0.70
		                & 47.40                                     & 49.02 & 39.10 & 8.30    \\
		I-GSP           & 28.40                                     & 16.93 & 19.60 & 8.80
		                & 94.70                                     & 28.79 & 73.10 & 21.60   \\
		\bottomrule
	\end{tabular}
\end{table}

Figure~\ref{fig:simulation} compares accuracy and runtime in a representative well-specified synthetic setting.
I-FLOP achieves the lowest I-CPDAG SHD in both ER graph regimes, with a clearer advantage on denser graphs, where GIES and I-GSP have substantially larger structural errors.
I-FLOP is also markedly faster than GIES in the denser setting while remaining substantially more accurate than the faster I-GSP baseline, yielding a favorable accuracy--runtime trade-off.
Table~\ref{tab:simulation} shows the decomposition comparison of I-CPDAG SHD between our I-FLOP and the external baselines.

\subsection{Real-data evaluation on Causal Chamber}
\label{causalchamber}

We evaluate algorithms on real interventional data from the Causal Chamber Light Tunnel system~\citep{gamella2025causal}. The view we use contains 20 variables, including one observational environment (10,000 samples) and five singleton hard-intervention environments with known targets (red, green, blue, \texttt{pol\_1}, \texttt{pol\_2}, each containing 1,000 samples), all derived from \texttt{lt\_interventions\_standard\_v1}.
Figure~\ref{fig:chamber} shows that I-FLOP exhibits the best accuracy--runtime trade-off on this real interventional dataset.

\begin{figure}[ht]
	\centering
	\includegraphics[width=0.7\linewidth]{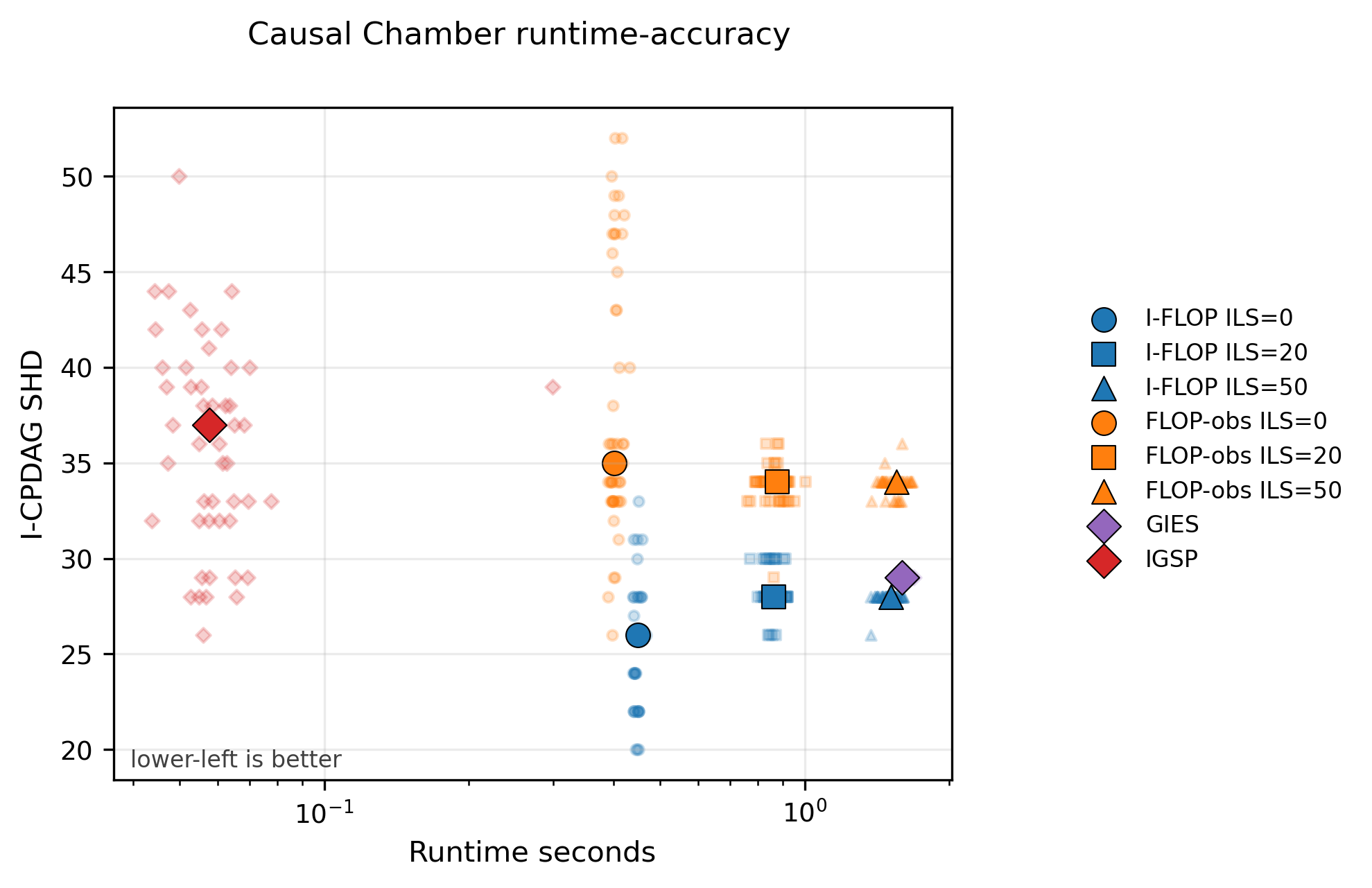}
	\caption{I-FLOP (ILS = 0) attains the lowest mean I-CPDAG SHD while remaining faster than GIES.
		All I-FLOP configurations outperform FLOP-obs and I-GSP, and positive ILS budgets reduce variability (details in Appendix~\ref{ILS}).}
	\label{fig:chamber}
\end{figure}

\section{Conclusion}
\label{sec:conclusion}

We introduce I-FLOP, an order-based score search method for causal structure learning from multi-environment interventional data with known targets. By replacing FLOP's observational local score with a target-filtered pooled interventional score, I-FLOP retains the efficiency of warm-start grow--shrink updates and reinsertion-based order search while incorporating intervention information in a principled manner. Under Assumptions~\ref{assump:data-generating process}--\ref{assump:interventional faithfulness assumption}, exact global minimization of the target-filtered pooled score asymptotically recovers the true I-MEC, while under Assumptions~\ref{assump:data-generating process}--\ref{assump:env-proportions}, the fixed-prefix target-filtered grow--shrink update recovers the pooled local parent target in the large-sample limit. Empirically, I-FLOP achieves a favorable accuracy--runtime trade-off on synthetic interventional data and competitive performance on the Causal Chamber benchmark.

The main limitations of the current framework stem from its reliance on known hard interventions, linear-Gaussian mechanisms with strictly positive error variances, and conservative intervention designs. These assumptions restrict the model class to stochastic linear systems and exclude deterministic or functional relationships with degenerate noise, as well as nonlinear or uncertain intervention mechanisms. This mismatch is particularly relevant for physical systems such as the Causal Chamber, where variables may obey deterministic conservation laws or bounded nonlinear responses, and helps explain why more aggressive score optimization does not necessarily yield lower SHD when the data-generating process departs from the assumed model class. Future work could therefore extend target-filtered order search to deterministic or functional relationships, nonlinear additive noise models, and soft, uncertain, or partially known intervention targets (i.e., settings where the identity of the intervened variable(s) in a given environment is not fully or reliably known, as considered by \citet{squires2020permutation} and \citet{gamella2022characterization}), potentially through more flexible non-parametric or generalized scoring criteria.

\section*{Acknowledgments}
A.M.~was partially supported by Novo Nordisk Foundation grant NNF20OC0062897 and also by the Pioneer Centre for Statistical and computational Methods for Advanced Research to Transform Biomedicine (SMARTbiomed), DNRF grant number P4.
We thank Ming Cai from the Graduate School of Informatics, Kyoto University, for helpful discussions and comments on an earlier version of the manuscript.

\bibliography{references}

\newpage
\appendix
\section{Proofs of Theoretical Results}
\label{app:proofs}

\subsection{Single-Move Score Comparisons}

We first establish the large-sample score comparisons which will be used in the proof of Theorem~\ref{thm:iflop-gs-consistency}.

\begin{lemma}
	\label{lem:tf-local-score-comparison}
	Fix $v\in V$ and $Z\subseteq V\setminus\{v\}$.
	Under Assumptions~\ref{assump:data-generating process}--\ref{assump:env-proportions}, the following statements hold simultaneously over the finitely many candidate parent sets contained in $Z$ with probability tending to one.
	\begin{enumerate}
		\item If $P,Q\subseteq Z$ satisfy $\sigma^{2}_{v\mid Q,\mathrm{pool}} < \sigma^{2}_{v\mid P,\mathrm{pool}}$, then $s_{v}^{\mathrm{tf\text{-}pool}}(Q) < s_{v}^{\mathrm{tf\text{-}pool}}(P)$.

		\item If $u\in P\subseteq Z$ and $\sigma^{2}_{v\mid P\setminus\{u\},\mathrm{pool}} = \sigma^{2}_{v\mid P,\mathrm{pool}}$, then $s_{v}^{\mathrm{tf\text{-}pool}}(P\setminus\{u\}) < s_{v}^{\mathrm{tf\text{-}pool}}(P)$.
	\end{enumerate}
\end{lemma}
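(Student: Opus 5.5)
The approach is to compare the score difference $s_v^{\mathrm{tf\text{-}pool}}(Q)-s_v^{\mathrm{tf\text{-}pool}}(P)$ directly:
\[
s_v^{\mathrm{tf\text{-}pool}}(Q)-s_v^{\mathrm{tf\text{-}pool}}(P)=\frac{N_v}{2}\log\frac{\widehat\sigma^2_{v\mid Q}}{\widehat\sigma^2_{v\mid P}}+\frac{1}{2}\log N\,(|Q|-|P|).
\]
Because $Z$ is finite, only finitely many pairs $(P,Q)$ arise. It therefore suffices to prove each statement for a fixed pair with probability tending to one; a union bound over the finitely many pairs then gives simultaneity.

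\textbf{Statement 1.} By the weak law of large numbers within each valid environment and Assumption~\ref{assump:env-proportions}, $N_v^{-1}C_v^{\mathrm{pool}}\to\Sigma_v^{\mathrm{pool}}$ in probability. The map sending a positive definite matrix to its Schur complement is continuous, and $(\Sigma_v^{\mathrm{pool}})_{PP}$ is positive definite because the noise variances are positive. By the continuous mapping theorem, $\widehat\sigma^2_{v\mid P}\to\sigma^2_{v\mid P,\mathrm{pool}}>0$ in probability. Hence the log-ratio converges to the strictly negative constant $\log(\sigma^2_{v\mid Q,\mathrm{pool}}/\sigma^2_{v\mid P,\mathrm{pool}})$. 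Multiplying by $N_v/2\asymp \alpha_v N/2$ gives a term of order $-cN$, which dominates the $O(\log N)$ penalty difference. The score difference is therefore negative with probability tending to one.

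\textbf{Statement 2.} Here the two population risks coincide, so the fit term is only of stochastic order $O_p(1)$ and the penalty difference $-\tfrac12\log N$ must win. Write $P'=P\setminus\{u\}$. By Frisch--Waugh--Lovell, $\widehat\sigma^2_{v\mid P'}/\widehat\sigma^2_{v\mid P}=1/(1-\hat\rho^2)$, where $\hat\rho$ is the sample partial correlation of $v$ and $u$ given $P'$, computed from the pooled centered data. Population equality of the risks means the population partial regression coefficient of $u$ is zero, so $\rho=0$. I then show $\sqrt{N_v}\,\hat\rho=O_p(1)$. Within each environment the centered samples are i.i.d. with finite fourth moments: they are Gaussian, and I assume the $Q^{(e)}$ have finite fourth moments as well, or are Gaussian. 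The pooled scatter matrix is then a sum of finitely many independent sample covariances, each $\sqrt{n_e}$-consistent by the CLT, so $N_v^{-1}C_v^{\mathrm{pool}}-\Sigma_v^{\mathrm{pool}}=O_p(N^{-1/2})$. The delta method applied to the smooth partial-correlation map, which vanishes at the population value, gives $\hat\rho=O_p(N^{-1/2})$. Consequently
\[
\frac{N_v}{2}\log\frac{\widehat\sigma^2_{v\mid P'}}{\widehat\sigma^2_{v\mid P}}=-\frac{N_v}{2}\log(1-\hat\rho^2)=O_p(1),
\]
and the score difference equals $O_p(1)-\tfrac12\log N\to-\infty$ in probability.

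\textbf{Main obstacle.} The hardest step is the $\sqrt N$ rate in Statement 2. The pooled covariance mixes environments with different joint distributions, so the usual single-distribution likelihood-ratio $\chi^2$ argument does not apply directly. Using the Schur-complement expression together with environment-wise CLTs and the delta method avoids this, provided every environment has finite fourth moments; I would state this requirement explicitly, as it holds for the Gaussian SEM and Gaussian interventions.
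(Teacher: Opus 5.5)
Your argument follows the paper's proof. Statement~1 is argued the same way: consistency of $\widehat\sigma^2_{v\mid P}$ simultaneously over the finitely many $P\subseteq Z$, then a fit gap of order $N_v$ beating a penalty gap of order $\log N$. Statement~2 has the same skeleton: an $O_p(1)$ fit gain against the diverging $\tfrac12\log N$ penalty. The paper only invokes ``a standard fixed-dimensional quadratic expansion''; you make the $O_p(1)$ claim explicit via $\widehat\sigma^2_{v\mid P'}/\widehat\sigma^2_{v\mid P}=1/(1-\hat\rho^2)$ and a delta method on environment-wise CLTs. That is the right move. The pooled sample is a mixture on which the Gaussian conditional model is in general misspecified, so Wilks' theorem is not available. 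Your version also exposes a hypothesis the paper leaves implicit. Without something like finite fourth moments for the $Q^{(e)}$, a population zero that arises from cancellation across environments can fluctuate at rate slower than $N^{-1/2}$, and then $N_v\hat\rho^2$ diverges polynomially.

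There is, however, a gap in the step $N_v^{-1}C_v^{\mathrm{pool}}-\Sigma_v^{\mathrm{pool}}=O_p(N^{-1/2})$. With $\widehat\Sigma^{(e)}=C^{(e)}/n_e$,
\[
N_v^{-1}C_v^{\mathrm{pool}}-\Sigma_v^{\mathrm{pool}}=\sum_{e\in\mathcal E_v}\frac{n_e}{N_v}\bigl(\widehat\Sigma^{(e)}-\Sigma^{(e)}\bigr)+\sum_{e\in\mathcal E_v}\Bigl(\frac{n_e}{N_v}-\frac{\pi_e}{\alpha_v}\Bigr)\Sigma^{(e)}.
\]
The CLT controls only the first sum. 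Assumption~\ref{assump:env-proportions} gives no rate for the deterministic second sum. This term is harmless when the vanishing of $u$'s coefficient holds for every weighting, e.g.\ when the regression of $v$ on $P$ has the same coefficients in all valid environments. It bites when the equality in Statement~2 holds only at the limiting weights.

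Here is an example. Let $G^\star$ have edges $a\to v$, $a\to u$, $v\to u$, with $X_v=X_a+\varepsilon_v$, $X_u=X_v-2X_a+\varepsilon_u$, and $\sigma_v^2=1$. Let $\operatorname{Var}(X_a)=1/2$ observationally and $3/2$ in a second environment intervening on $a$, with $\pi_0=\pi_1=1/2$.
\begin{itemize}
\item Then $\operatorname{Cov}^{(e)}(X_v,X_u)=1-\operatorname{Var}^{(e)}(X_a)=\pm1/2$, so $\sigma^2_{v\mid\{u\},\mathrm{pool}}=\sigma^2_{v\mid\emptyset,\mathrm{pool}}$.
\item Now take $n_0/N-1/2\asymp N^{-1/4}$, which Assumption~\ref{assump:env-proportions} permits. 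Then $\hat\rho$ is of exact order $N^{-1/4}$.
\item So $N_v\hat\rho^2\asymp N^{1/2}\gg\log N$, and Statement~2 fails for $Z=P=\{u\}$.
\end{itemize}

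Your proof therefore needs an extra rate condition, e.g.\ $n_e/N-\pi_e=O(N^{-1/2})$; $o(\sqrt{\log N/N})$ suffices. This holds trivially for the fixed per-environment sample sizes used in the experiments. With it and your moment condition, the rest of your argument goes through. The paper's proof has the same gap, hidden inside its appeal to the quadratic expansion.
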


\begin{proof}
	By Assumptions~\ref{assump:data-generating process} and \ref{assump:env-proportions}, the environment-wise sample covariances converge to their population counterparts.
	Since
	$n_e/N\to\pi_e>0$ for every $e\in\mathcal E$, it follows that, for every
	fixed $P\subseteq Z$,
	\[
		\widehat{\sigma}^{2}_{v\mid P} \xrightarrow{p} \sigma^{2}_{v\mid P,\mathrm{pool}}.
	\]
	Since $Z$ is fixed, there are only finitely many candidate parent sets, and this convergence holds simultaneously over all $P\subseteq Z$.

	For the first statement, suppose that $\sigma^{2}_{v\mid Q,\mathrm{pool}} < \sigma^{2}_{v\mid P,\mathrm{pool}}$. Then
	\[
		\frac{1}{N_v} \left( s_{v}^{\mathrm{tf\text{-}pool}}(Q) - s_{v}^{\mathrm{tf\text{-}pool}}(P) \right) \xrightarrow{p} \frac{1}{2} \log \frac{ \sigma^{2}_{v\mid Q,\mathrm{pool}} }{ \sigma^{2}_{v\mid P,\mathrm{pool}} } <0.
	\]
	The fit contribution is of order $N_v$, whereas the difference in complexity penalties is of order $\log N$, and $N_v/N\to\alpha_v>0$.
	Therefore, any move that strictly improves the population fit decreases the empirical local score with probability tending to one.

	For the second statement, suppose that deleting $u$ leaves the population residual risk unchanged.
	The two nested local models then attain the same population optimum for the centered valid-environment criterion.
	Under Assumptions~\ref{assump:data-generating process}--\ref{assump:env-proportions}, a standard fixed-dimensional quadratic expansion of the maximized criterion gives an $O_p(1)$ fit improvement from retaining the redundant parent $u$.
	Retaining $u$, however, incurs the additional penalty $\frac{1}{2}\log N$, which diverges.
	Therefore,
	\[
		s_{v}^{\mathrm{tf\text{-}pool}}(P\setminus\{u\}) < s_{v}^{\mathrm{tf\text{-}pool}}(P).
	\]
\end{proof}

\subsection{Proof of Proposition~\ref{prop:no-local-traps}}
\label{app:proof-prop-no-local-traps}

\begin{proof}
	Fix \(v\in V\) and \(Z\subseteq V\setminus\{v\}\). Under Assumptions~\ref{assump:data-generating process}, \ref{assump:conservative intervention design} and \ref{assump:env-proportions}, the valid-environment pooled covariance matrix $\Sigma_v^{\mathrm{pool}} = \frac{1}{\alpha_v}\sum_{e\in\mathcal{E}_v}\pi_e\Sigma^{(e)}$ is positive definite, since each \(\Sigma^{(e)}> 0\) and \(\pi_e/\alpha_v>0\) for every \(e\in\mathcal{E}_v\). Hence every principal submatrix of \(\Sigma_v^{\mathrm{pool}}\), in particular \(\Sigma_{ZZ}^{\mathrm{pool}}\), is positive definite and therefore invertible.

	Define the pooled population regression coefficient vector of \(X_v\) on \(X_Z\) by
	\[
		w_Z := (\Sigma_{ZZ}^{\mathrm{pool}})^{-1}\Sigma_{Zv}^{\mathrm{pool}},
		\qquad
		S := \operatorname{supp}(w_Z).
	\]
	Since \(\Sigma_{ZZ}^{\mathrm{pool}}> 0\), the least-squares coefficient vector \(w_Z\) is unique. For each \(P\subseteq Z\), let \(R(P):=\sigma^2_{v\mid P,\mathrm{pool}}\) denote the pooled population residual variance from regressing \(X_v\) on \(X_P\).

	We first show that \(R(P)=R(Z)\) if and only if \(S\subseteq P\). If \(S\subseteq P\), then \(w_Z\) vanishes on \(Z\setminus P\), so the optimal predictor based on \(X_Z\) depends only on \(X_P\). Hence \(R(P)\le R(Z)\). Since \(P\subseteq Z\), enlarging the regressor set cannot increase the least-squares residual variance, so \(R(Z)\le R(P)\). Therefore \(R(P)=R(Z)\).

	Conversely, suppose that \(R(P)=R(Z)\). Extend the least-squares coefficient vector for the regression on \(X_P\) to a vector indexed by \(Z\) by assigning zero coefficients to \(Z\setminus P\). This extended vector achieves residual variance \(R(Z)\), and is therefore also a minimizer of the least-squares problem over \(X_Z\). By uniqueness of \(w_Z\), it must coincide with \(w_Z\). Thus \((w_Z)_u=0\) for every \(u\in Z\setminus P\), which implies \(S\subseteq P\).

	Therefore, the minimizers of \(R(P)\) over \(P\subseteq Z\) are exactly the supersets of \(S\). In particular, \(S\) is the unique minimizer of smallest cardinality. We have $M_v^{\mathrm{pool}}(Z) = S = \operatorname{supp}(w_Z)$.
	It remains to prove the absence of local traps. Let \(P\subseteq Z\) satisfy \(M_v^{\mathrm{pool}}(Z)\nsubseteq P\), and set \(A:=M_v^{\mathrm{pool}}(Z)\setminus P\), so that \(A\neq\emptyset\). Suppose, toward a contradiction, that \(R(P\cup\{u\})=R(P)\) for every \(u\in A\).

	For \(u\in A\), let \(r_v^P\) and \(r_u^P\) denote the population residuals obtained by projecting \(X_v\) and \(X_u\), respectively, onto \(X_P\) under the pooled distribution. By the Frisch--Waugh--Lovell theorem~\citep{lovell2008simple}, equivalently the Schur complement identity,
	\[
		R(P)-R(P\cup\{u\}) = \frac{\operatorname{Cov}_{\mathrm{pool}}(r_v^P,r_u^P)^2} {\operatorname{Var}_{\mathrm{pool}}(r_u^P)}.
	\]
	Since \(\Sigma_v^{\mathrm{pool}}> 0\), we have \(\operatorname{Var}_{\mathrm{pool}}(r_u^P)>0\). Hence the assumption \(R(P\cup\{u\})=R(P)\) implies \(\operatorname{Cov}_{\mathrm{pool}}(r_v^P,r_u^P)=0\) for every \(u\in A\).

	Let \(r_A^P\) denote the vector of residuals obtained by projecting \(X_A\) onto \(X_P\). Then \(\operatorname{Cov}_{\mathrm{pool}}(r_A^P,r_v^P)=\mathbf{0}\). Applying the multivariate Frisch--Waugh--Lovell identity gives
	\[
		R(P)-R(P\cup A)
		=
		\operatorname{Cov}_{\mathrm{pool}}(r_v^P,r_A^P)
		\bigl[\operatorname{Var}_{\mathrm{pool}}(r_A^P)\bigr]^{-1}
		\operatorname{Cov}_{\mathrm{pool}}(r_A^P,r_v^P)
		=0,
	\]
	where \(\operatorname{Var}_{\mathrm{pool}}(r_A^P)> 0\) by positive definiteness of the corresponding Schur complement. Thus \(R(P\cup A)=R(P)\).

	On the other hand, $P\cup A = P\cup M_v^{\mathrm{pool}}(Z) \supseteq M_v^{\mathrm{pool}}(Z) = S$.
	By the characterization above, this implies \(R(P\cup A)=R(Z)\). Therefore \(R(P)=R(Z)\), which in turn implies \(S\subseteq P\). Since \(S=M_v^{\mathrm{pool}}(Z)\), this contradicts the assumption \(M_v^{\mathrm{pool}}(Z)\nsubseteq P\).

	Hence there exists at least one \(u\in M_v^{\mathrm{pool}}(Z)\setminus P\) such that $\sigma^2_{v\mid P\cup\{u\},\mathrm{pool}} < \sigma^2_{v\mid P,\mathrm{pool}}$.
\end{proof}

\subsection{Proof of Theorem~\ref{thm:iflop-gs-consistency}}

\begin{proof}
	Let $P_{\mathrm{grow}}$ denote the parent set obtained at the end of the grow phase.
	We first show that $M\subseteq P_{\mathrm{grow}}$.
	Suppose otherwise that $M\nsubseteq P_{\mathrm{grow}}$.
	By Proposition~\ref{prop:no-local-traps}, there exists $u\in M\setminus P_{\mathrm{grow}}$ such that
	\[
		\sigma^{2}_{v\mid P_{\mathrm{grow}}\cup\{u\},\mathrm{pool}} < \sigma^{2}_{v\mid P_{\mathrm{grow}},\mathrm{pool}}.
	\]
	While Lemma~\ref{lem:tf-local-score-comparison} gives
	\[
		s_{v}^{\mathrm{tf\text{-}pool}} \bigl(P_{\mathrm{grow}}\cup\{u\}\bigr) < s_{v}^{\mathrm{tf\text{-}pool}} \bigl(P_{\mathrm{grow}}\bigr),
	\]
	which contradicts the grow phase.
	Hence, $M\subseteq P_{\mathrm{grow}}$.

	We next show inductively that the shrink phase preserves $M$ while removing all variables outside $M$.
	Initially, $M\subseteq P_{\mathrm{grow}}$.
	Suppose that the current parent set $P$ during the shrink phase satisfies $M\subseteq P$.
	Since $M$ is a global minimizer of the pooled residual-risk criterion and adding nodes cannot increase the optimal population residual risk, then we have
	\[
		\sigma^{2}_{v\mid P,\mathrm{pool}} = \sigma^{2}_{v\mid M,\mathrm{pool}}.
	\]

	If $M\subsetneq P$, take any $u\in P\setminus M$.
	Since $M\subseteq P\setminus\{u\}\subseteq P$, the same monotonicity and optimality argument gives
	\[
		\sigma^{2}_{v\mid P\setminus\{u\},\mathrm{pool}} = \sigma^{2}_{v\mid P,\mathrm{pool}} = \sigma^{2}_{v\mid M,\mathrm{pool}}.
	\]
	By Lemma~\ref{lem:tf-local-score-comparison}, deleting $u$ decreases the empirical local score:
	\[
		s_{v}^{\mathrm{tf\text{-}pool}}(P\setminus\{u\}) < s_{v}^{\mathrm{tf\text{-}pool}}(P).
	\]
	Thus, every strict superset of $M$ admits a score-decreasing deletion of an extraneous parent.
	It remains to show that no score-decreasing deletion removes a member of $M$.
	Let $u\in M$.
	Since $M\subseteq P$, we have
	\[
		M\setminus(P\setminus\{u\})=\{u\}.
	\]
	Suppose, for contradiction, that $P\setminus\{u\}$ attains the same minimum population residual risk as $M$.
	Since it does not contain $M$, the no-local-trap assumption implies that adding $u$ strictly reduces its population residual risk:
	\[
		\sigma^{2}_{v\mid P,\mathrm{pool}} = \sigma^{2}_{v\mid (P\setminus\{u\})\cup\{u\},\mathrm{pool}} < \sigma^{2}_{v\mid P\setminus\{u\},\mathrm{pool}}.
	\]
	which contradicts the assumed equality with the minimum risk attained by $M$.
	Therefore,
	\[
		\sigma^{2}_{v\mid P\setminus\{u\},\mathrm{pool}} > \sigma^{2}_{v\mid P,\mathrm{pool}}.
	\]
	Applying the first part of Lemma~\ref{lem:tf-local-score-comparison}, with $Q=P$ and with $P$ there replaced by $P\setminus\{u\}$, gives
	\[
		s_{v}^{\mathrm{tf\text{-}pool}}(P) < s_{v}^{\mathrm{tf\text{-}pool}}(P\setminus\{u\}).
	\]
	Hence deleting a node in $M$ does not improve the score.

	It follows inductively that every accepted shrink move removes a node in $P_{\mathrm{grow}}\setminus M$, and the shrink phase terminates at $M$.

	Therefore, one complete grow--shrink cycle returns $M_v^{\mathrm{pool}}(Z)$ in the limit $N\to\infty$.
	Since the argument does not depend on the initial parent set $P_0\subseteq Z$, target-filtered grow--shrink returns $M_v^{\mathrm{pool}}(Z)$ with probability tending to one from any warm start.
\end{proof}

\subsection{Proof of Corollary~\ref{cor:iflop-fixed-order}}

\begin{proof}
	Fix an order $\tau$.
	For every node $v\in V$, let $Z_v(\tau)=\operatorname{Pred}_{\tau}(v)$ be the set of variables preceding $v$ in $\tau$.
	Then applying Theorem~\ref{thm:iflop-gs-consistency} to each node $v$ gives
	\[
		\Pr\!\left( \widehat P_{v,N}(\tau) = M_v^{\mathrm{pool}}(Z_v(\tau)) \right) \longrightarrow 1.
	\]
	Since $V$ is finite, these events hold simultaneously for all nodes with probability tending to one.
	Therefore,
	\[
		\Pr\!\left( \widehat P_{v,N}(\tau) = M_v^{\mathrm{pool}}(Z_v(\tau)) \ \text{for all } v\in V \right) \longrightarrow 1.
	\]

	By construction, the DAG $\widehat G_N(\tau)$ returned under the fixed order $\tau$ has parent sets
	\[
		\operatorname{Pa}_{\widehat G_N(\tau)}(v) = \widehat P_{v,N}(\tau), \qquad v\in V.
	\]
	It follows that
	\[
		\operatorname{Pa}_{\widehat G_N(\tau)}(v) = M_v^{\mathrm{pool}}(Z_v(\tau)), \qquad v\in V,
	\]
	simultaneously with probability tending to one.
\end{proof}

\subsection{Proof of Lemma~\ref{lem:likelihood-compatibility}}

\begin{proof}
	The local centered likelihood contribution of the original mechanism of $v$ over its valid environments is
	\[
		\ell_N(P;\beta_v,\sigma_v^2) = \sum_{e\in\mathcal E_v} \ell_{v,e}(P;\beta_v,\sigma_v^2) = -\frac{N_v}{2}\log(2\pi\sigma_v^2) -\frac{1}{2\sigma_v^2} \sum_{e\in\mathcal E_v} \left\| \widetilde X_v^{(e)} - \widetilde X_P^{(e)}\beta_v \right\|_2^2.
	\]
	For fixed $P$, maximizing over $\beta_v$ yields the pooled least-squares estimator computed from the valid environments of $v$.
	Maximizing further over $\sigma_v^2$ gives
	\[
		\widehat{\sigma}^{2}_{v\mid P} = \frac{1}{N_v} \sum_{e\in\mathcal E_v} \left\| \widetilde X_v^{(e)} - \widetilde X_P^{(e)} \widehat{\beta}_{v\mid P} \right\|_2^2.
	\]
	Therefore,
	\[
		-\sup_{\beta_v,\sigma_v^2} \ell_N(P;\beta_v,\sigma_v^2) = \frac{N_v}{2} \left( \log(2\pi) + 1 + \log\widehat{\sigma}^{2}_{v\mid P} \right).
	\]

	Since $\ell_N(G,\theta_G)$ is node-wise decomposable,
	\[
		-\sup_{\theta_G} \ell_N(G,\theta_G) = \sum_{v\in V} \frac{N_v}{2} \left( 1+ \log \widehat{\sigma}^{2}_{v\mid\operatorname{Pa}_G(v)} \right) + \frac{1}{2} \sum_{v\in V} N_v\log(2\pi).
	\]
	Adding the complexity penalty in the target-filtered pooled score yields
	\[
		S_{\mathrm{tf\text{-}pool}}(G;\mathcal D,\mathcal I) = -\sup_{\theta_G} \ell_N(G,\theta_G) + \frac{1}{2}\log N \sum_{v\in V} \left( |\operatorname{Pa}_G(v)|+1 \right) + C_N,
	\]
	where $C_N = - \frac{1}{2} \sum_{v\in V} N_v\log(2\pi)$ does not depend on $G$.
	Thus, $S_{\mathrm{tf\text{-}pool}}$ is a decomposable criterion for the valid-environment structural-mechanism likelihood after profiling out environment-specific intercepts.
	By Assumption~\ref{assump:data-generating process}, the likelihood contributions associated with directly targeted replacement mechanisms are independent of the candidate parent sets of the original mechanisms and therefore do not affect graph comparison.
\end{proof}

\subsection{Proof of Theorem~\ref{thm:global-imec-identifiability}}

\begin{proof}
	By Lemma~\ref{lem:likelihood-compatibility}, the target-filtered pooled score agrees, up to terms independent of the candidate DAG, with the negative maximized valid-environment structural-mechanism likelihood after profiling out environment-specific intercepts.
	Under known hard interventions, the omitted likelihood contributions of directly targeted replacement mechanisms are graph-independent nuisance terms.
	Hence, $L_\infty(G)$ is the population fit criterion induced by the graph-dependent part of the known-target interventional likelihood.

	The true DAG $G^\star$ generates the observational and interventional distribution family under the known target family $\mathcal I$.
	Therefore, it attains the minimum value of $L_\infty$, and $G^\star\in\mathcal A$.

	Now let $H\in\mathcal A$.
	Since $H$ attains the same minimum value of $L_\infty$ as $G^\star$, the maximized population graph-dependent likelihood under $H$ has no loss relative to the true known-target interventional distribution family.
	Under the regular finite-dimensional Gaussian model, a misspecified candidate model would incur a strictly positive population Kullback--Leibler loss.
	Hence, the true observational distribution is Markov with respect to $H$, and, for every intervention target $I_e\in\mathcal I$, the corresponding interventional distribution is Markov with respect to the intervention graph $H^{I_e}$.
	Thus, every $H\in\mathcal A$ is a causal independence map of the true known-target interventional distribution family.

	Fix any $G\in\mathcal A_{\min}$.
	Since $G^\star\in\mathcal A$ and $\mathcal A_{\min}$ retains the sparsest members of $\mathcal A$, we have $|E_G|\le |E_{G^\star}|$.
	By the interventional faithfulness assumption, \citet[Remark~1]{hauser2015jointly} show that the minimum causal independence maps of the true known-target interventional distribution family are exactly the DAGs in the $\mathcal I$-Markov equivalence class of $G^\star$.
	In particular, $G^\star$ is a minimum causal independence map, so no causal independence map of this distribution family has fewer edges than $G^\star$.

	Since $G\in\mathcal A_{\min}\subseteq\mathcal A$, the preceding argument shows that $G$ is itself a causal independence map.
	Therefore, $|E_G|\ge |E_{G^\star}|$.

	Combining the two inequalities yields $|E_G|=|E_{G^\star}|$.

	Hence, $G$ is a minimum causal independence map of the true known-target interventional distribution family and $G\in\mathcal G_{\mathcal I}^{\star}$.

	Since $G\in\mathcal A_{\min}$ was arbitrary, we conclude that
	\[
		\mathcal A_{\min} \subseteq \mathcal G_{\mathcal I}^{\star}.
	\]

\end{proof}

\subsection{Proof of Corollary~\ref{cor:exact-icpdag-recovery}}

\begin{proof}
	Let
	\[
		\widehat G_N \in \operatorname*{arg\,min}_{G\in\mathcal G_{\mathrm{DAG}}} S_{\mathrm{tf\text{-}pool}}(G;\mathcal D,\mathcal I)
	\]
	be an exact global minimizer of the empirical target-filtered pooled score.
	Define its empirical fit component by
	\[
		R_N(G) = \sum_{v\in V} \frac{N_v}{2} \log \widehat{\sigma}^{2}_{v\mid\operatorname{Pa}_G(v)}.
	\]
	Up to terms independent of $G$,
	\[
		S_{\mathrm{tf\text{-}pool}}(G;\mathcal D,\mathcal I) = R_N(G) + \frac{1}{2}\log N \sum_{v\in V} \left(|\operatorname{Pa}_G(v)|+1\right).
	\]

	For every fixed DAG $G$, consistency of the valid-environment pooled residual variances and the assumption on the asymptotic environment sample proportions imply
	\[
		\frac{1}{N}R_N(G) \xrightarrow{p} L_\infty(G).
	\]
	Since $V$ is fixed, $\mathcal G_{\mathrm{DAG}}$ is finite, and this convergence holds simultaneously over all candidate DAGs.
	Consider any $G\notin\mathcal A$ and any $G_0\in\mathcal A$.
	By the definition of $\mathcal A$, we have $L_\infty(G)>L_\infty(G_0)$.
	Therefore,
	\[
		R_N(G)-R_N(G_0) = N\left(L_\infty(G)-L_\infty(G_0)\right) + o_p(N).
	\]
	The difference in complexity penalties is of order $\log N$, whereas the fit difference is of order $N$.
	Hence,
	\[
		S_{\mathrm{tf\text{-}pool}}(G;\mathcal D,\mathcal I) - S_{\mathrm{tf\text{-}pool}}(G_0;\mathcal D,\mathcal I) \xrightarrow{p} +\infty.
	\]
	Since the DAG space is finite, it follows that $\Pr\!\left(\widehat G_N\in\mathcal A\right) \longrightarrow 1$.

	It remains to distinguish among the members of $\mathcal A$.
	Let $G\in\mathcal A\setminus\mathcal A_{\min}$ and $G_0\in\mathcal A_{\min}$.
	Since $G,G_0\in\mathcal A$, both attain the same value of $L_\infty$ and, by the proof of Theorem~\ref{thm:global-imec-identifiability}, are causal independence maps of the true known-target interventional distribution family.
	Under Assumptions~\ref{assump:data-generating process}, \ref{assump:local-nonsingularity}, and \ref{assump:env-proportions}, the difference between their maximized valid-environment structural-mechanism likelihood contributions is $O_p(1)$ for the fixed-dimensional Gaussian conditional-mechanism models considered here.
	Equivalently,
	\[
		R_N(G)-R_N(G_0)=O_p(1).
	\]

	Since $G\notin\mathcal A_{\min}$ and $G_0\in\mathcal A_{\min}$, we have $|E_G|>|E_{G_0}|$.
	Moreover,
	\[
		\sum_{v\in V} \left( |\operatorname{Pa}_G(v)|+1\right) = |E_G|+|V|.
	\]
	Hence, the complexity penalty difference between $G$ and $G_0$ is $\frac{1}{2}\log N \left( |E_G|-|E_{G_0}| \right)$, which diverges to $+\infty$.
	Therefore,
	\[
		\Pr\!\left( S_{\mathrm{tf\text{-}pool}}(G;\mathcal D,\mathcal I) > S_{\mathrm{tf\text{-}pool}}(G_0;\mathcal D,\mathcal I) \right) \longrightarrow 1.
	\]
	Again using finiteness of $\mathcal G_{\mathrm{DAG}}$, we obtain $\Pr\!\left( \widehat G_N\in\mathcal A_{\min} \right) \longrightarrow 1$.
	By Theorem~\ref{thm:global-imec-identifiability}, we also have $\mathcal A_{\min} \subseteq \mathcal G_{\mathcal I}^{\star}$.
	Consequently, $\Pr\!\left( \widehat G_N\in\mathcal G_{\mathcal I}^{\star} \right) \longrightarrow 1$.

	If the known target family fully identifies the true DAG, so that $\mathcal G_{\mathcal I}^{\star} = \{G^\star\}$, then
	\[
		\Pr\!\left( \widehat G_N=G^\star \right) \longrightarrow 1.
	\]
\end{proof}

\section{Detailed Algorithmic Specifications}
\label{app:algorithm-details}

Algorithms~\ref{alg:reinsert-tf} and~\ref{alg:growshrink-tf}
detail the reinsertion and warm-start parent-set updates invoked in
Algorithm~\ref{alg:iflop}.
For brevity, let
\[
	s_v := s_v^{\mathrm{tf\text{-}pool}},
	\qquad
	P := (P_x)_{x\in V},
	\qquad
	\ell := (\ell_x)_{x\in V},
\]
where $P_x$ is the current parent set of node $x$ and
$\ell_x=s_x(P_x)$ is its current local score. We write
\[
	Q(\ell):=\sum_{x\in V}\ell_x
\]
for the total score.

For a prefix change of node $x$, $+x$ means that $x$ becomes admissible
as a parent, whereas $-x$ means that $x$ ceases to be admissible as a
parent. To describe the prefix changes induced by an adjacent swap, define
\[
	\Delta(d,x)
	:=
	\begin{cases}
		+x, & d=+1, \\
		-x, & d=-1,
	\end{cases}
	\qquad d\in\{-1,+1\}.
\]

\begin{algorithm2e}[H]
	\SetAlgoLined
	\DontPrintSemicolon
	\caption{Target-filtered reinsertion of node $v$}
	\label{alg:reinsert-tf}

	\KwIn{Current order $\tau$, parent sets $P$, local scores $\ell$, node $v$}
	\KwOut{Best reinsertion state $(\widehat{\tau},\widehat{P},\widehat{\ell})$}

	\BlankLine
	\tcp{Initialize the best state with the current state}
	$(\widehat{\tau},\widehat{P},\widehat{\ell})
		\leftarrow
		(\tau,P,\ell)$\;

	\BlankLine
	\tcp{Explore moving $v$ to the left and to the right}
	\ForEach{$d\in\{-1,+1\}$}{
		$(\tau',P',\ell')\leftarrow(\tau,P,\ell)$\;
		$j\leftarrow\operatorname{position}_{\tau'}(v)$\;

		\While{$1\le j+d\le p$}{
			$u\leftarrow\tau'_{j+d}$\;

			\tcp{Swap $v$ with the adjacent node $u$}
			$(\tau'_j,\tau'_{j+d})
				\leftarrow
				(\tau'_{j+d},\tau'_j)$\;

			\tcp{Update only the two parent sets affected by the swap}
			$(P'_v,\ell'_v)
				\leftarrow
				\mathrm{GrowShrink}_{\mathrm{tf}}
				\bigl(
				v,
				\operatorname{Pred}_{\tau'}(v),
				P'_v,
				\ell'_v,
				\Delta(d,u)
				\bigr)$\;

			$(P'_u,\ell'_u)
				\leftarrow
				\mathrm{GrowShrink}_{\mathrm{tf}}
				\bigl(
				u,
				\operatorname{Pred}_{\tau'}(u),
				P'_u,
				\ell'_u,
				\Delta(-d,v)
				\bigr)$\;

			$j\leftarrow j+d$\;

			\tcp{Retain the best reinsertion position found so far}
			\If{$Q(\ell')<Q(\widehat{\ell})$}{
				$(\widehat{\tau},\widehat{P},\widehat{\ell})
					\leftarrow
					(\tau',P',\ell')$\;
			}
		}
	}

	\BlankLine
	\KwRet{$(\widehat{\tau},\widehat{P},\widehat{\ell})$}\;
\end{algorithm2e}

\begin{algorithm2e}[H]
	\SetAlgoLined
	\DontPrintSemicolon
	\caption{Target-filtered warm-start grow--shrink}
	\label{alg:growshrink-tf}

	\SetKwFunction{GSTF}{GrowShrink$_{\mathrm{tf}}$}
	\SetKwProg{Fn}{Function}{:}{}

	\Fn{\GSTF{$v,Z,P_0,\ell_0,\delta$}}{
		\uIf{$\delta=+u$}{
			$P\leftarrow P_0\cup\{u\}$; $\ell\leftarrow s_v(P)$\;
			\If{$\ell\ge\ell_0$}{
				\KwRet{$P_0,\ell_0$}\;
			}
		}
		\uElseIf{$\delta=-u$}{
			\If{$u\notin P_0$}{
				\KwRet{$P_0,\ell_0$}\;
			}
			$P\leftarrow P_0\setminus\{u\}$; $\ell\leftarrow s_v(P)$\;
		}
		\Else{
			$P\leftarrow\emptyset$; $\ell\leftarrow s_v(\emptyset)$\;
		}

		\ForEach{$d\in\{+,-\}$}{
			\Repeat{$P=P^{\mathrm{old}}$}{
				$P^{\mathrm{old}}\leftarrow P$\;
				$\mathcal U\leftarrow
					\begin{cases}
						Z\setminus P^{\mathrm{old}}, & d=+, \\
						P^{\mathrm{old}},            & d=-;
					\end{cases}$\;

				\ForEach{$u\in\mathcal U$}{
					\If{$d=+$}{
						$P'\leftarrow P\cup\{u\}$\;
					}
					\Else{
						\If{$u\notin P$}{
							\textbf{continue}\;
						}
						$P'\leftarrow P\setminus\{u\}$\;
					}

					$\ell'\leftarrow s_v(P')$\;
					\If{$\ell'<\ell$}{
						$(P,\ell)\leftarrow(P',\ell')$\;
					}
				}
			}
		}
		\KwRet{$P,\ell$}\;
	}
\end{algorithm2e}

The $\delta=+u$ short-circuit (returning $P_0,\ell_0$ immediately when adding $u$ does not improve the score) is justified whenever $P_0$ is itself already a grow--shrink fixed point over $Z\setminus\{u\}$, which holds at every call site in Algorithm~\ref{alg:reinsert-tf}: by Proposition~\ref{prop:no-local-traps}, if no single admissible addition to such a $P_0$ improves the score, no further local move can either, so the full grow--shrink procedure would return $P_0$ unchanged regardless.

\section{Computational Scaling with Graph Size and Sample Size}
\label{scalability}

\begin{figure}[bt]
	\centering
	\includegraphics[width=\linewidth]{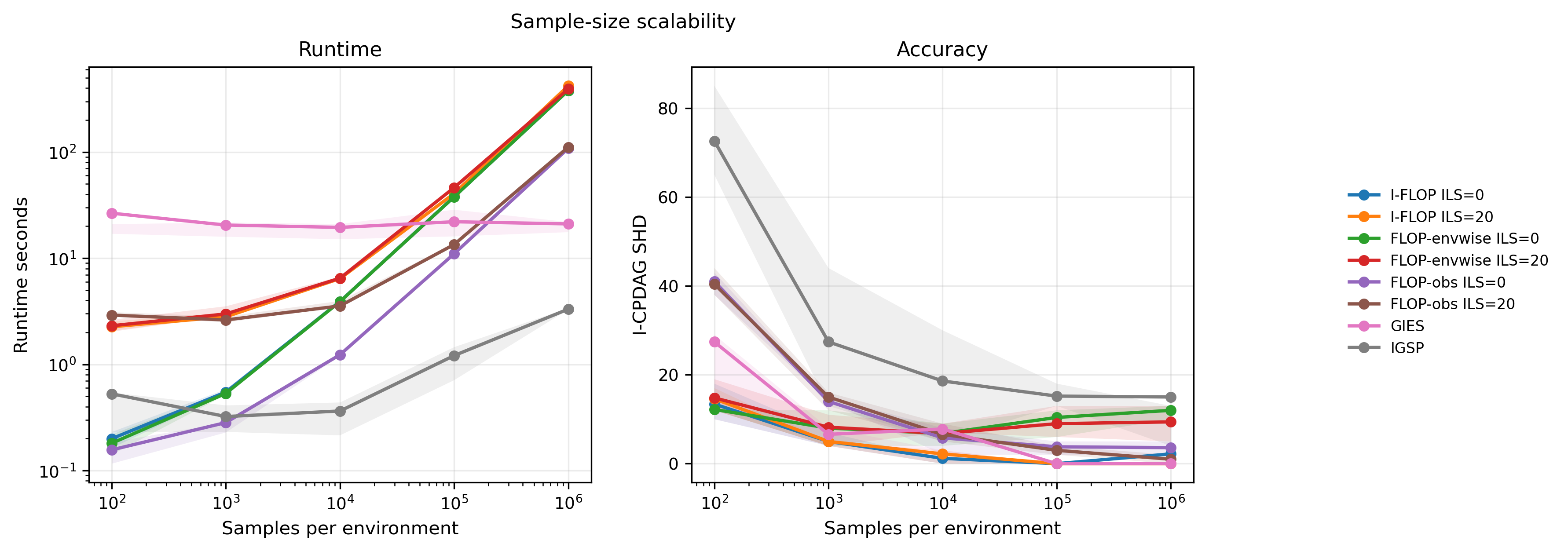}
	\includegraphics[width=\linewidth]{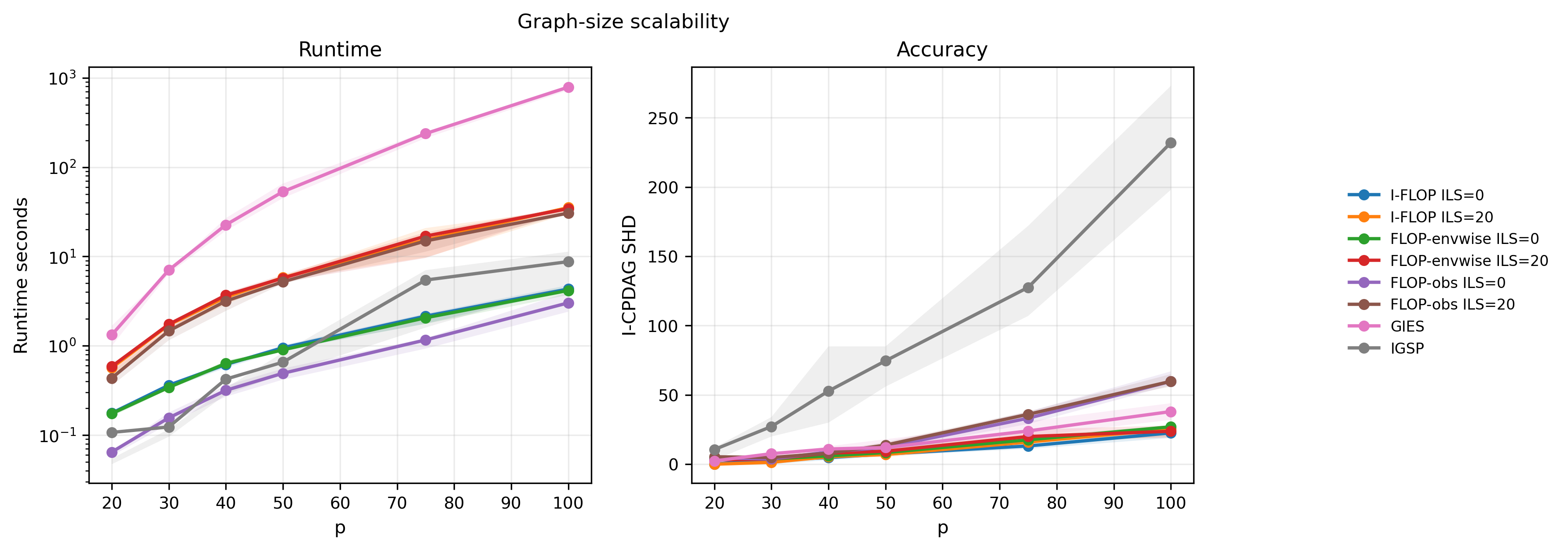}
	\caption{Top: Scaling with the number of samples per environment, with $p=40$, expected degree $3$, and sample size varied from $10^2$ to $10^6$ in each environment.
		\emph{Bottom:} Scaling with graph size, with expected degree $3$, $1000$ samples per environment, and $p$ varied from $20$ to $100$.
		Each setting contains one observational environment and five known-target singleton perfect hard-intervention environments.
		Runtime is shown on a logarithmic scale, and accuracy is measured by I-CPDAG SHD.}
	\label{fig:scalability}
\end{figure}

Figure~\ref{fig:scalability} reports additional scaling results on sparse ER DAGs.
We examine two complementary regimes: increasing the number of samples per environment while holding the graph regime fixed, and increasing the number of variables while fixing the sample size, expected degree, and intervention design.

\paragraph{Scaling with sample size.}
As the number of samples per environment increases, I-FLOP substantially reduces its I-CPDAG SHD, approaching near-perfect recovery.
GIES exhibits a comparable reduction in structural error while maintaining nearly constant runtime in this experiment.
By contrast, the runtime of the FLOP-based methods increases markedly with sample size, particularly beyond $10^4$ samples per environment.
FLOP-envwise does not match the accuracy of I-FLOP at large sample sizes, indicating that environment-wise estimation alone is insufficient without target-based filtering.
I-GSP improves with additional samples but retains noticeably higher structural error than I-FLOP and GIES.

\paragraph{Scaling with graph size.}
When the number of variables increases, I-FLOP maintains the lowest structural error among the evaluated methods while remaining computationally inexpensive relative to GIES.
Its runtime grows moderately from $p=20$ to $p=100$, whereas GIES becomes substantially more expensive, reaching several hundred seconds at the largest graph size.
The accuracy difference is also pronounced for larger graphs: I-FLOP retains comparatively low I-CPDAG SHD, while the errors of GIES, FLOP-obs, and especially I-GSP increase considerably.
FLOP-envwise remains computationally close to I-FLOP but incurs consistently higher recovery error, again supporting the contribution of target-based environment filtering.

\paragraph{Explanation of the runtime of GIES.}

\begin{figure}[ht]
	\centering
	\includegraphics[width=0.8\linewidth]{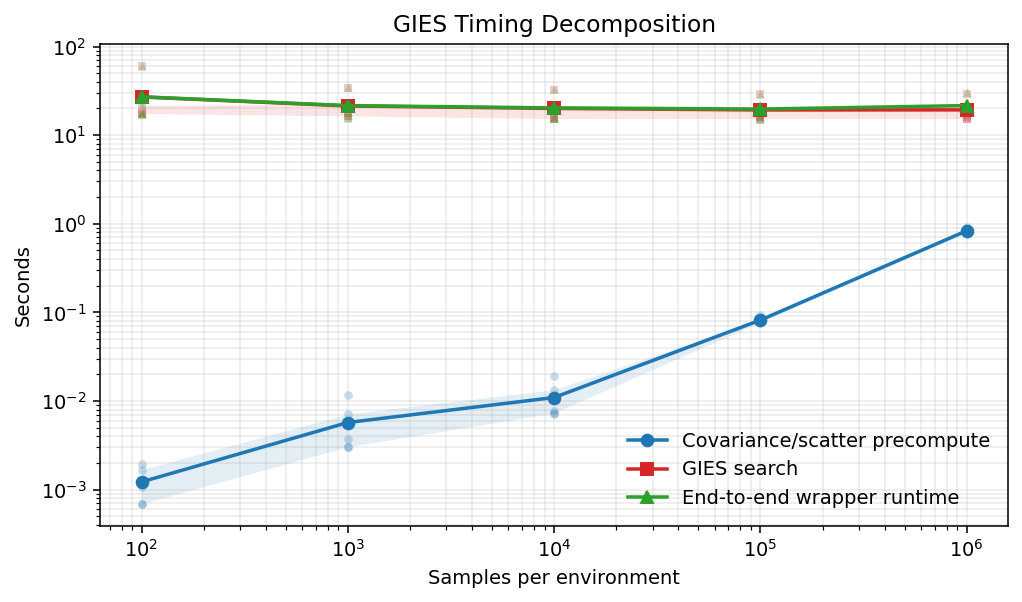}
	\caption{Runtime of GIES is decomposed into covariance/scatter precomputation, graph search, and end-to-end wrapper execution.
		All the settings are the same as Figure~\ref{fig:scalability}.}
	\label{fig:gies}
\end{figure}

The nearly flat runtime curve of GIES in the sample-size experiment is explained by the timing decomposition in Figure~\ref{fig:gies}.
GIES performs its graph search using covariance computation.
As the number of samples per environment increases from $10^2$ to $10^6$, the cost of computing these statistics rises from the millisecond scale to below one second.
However, the search stage consistently requires approximately $20$ seconds and therefore dominates the end-to-end runtime throughout the evaluated range.
End-to-end wrapper execution, covering input conversion into the GIES implementation's required format and output conversion back into a standardized DAG representation for evaluation, contributes only a small, roughly constant overhead relative to the search stage.
Consequently, increasing the sample size has little visible effect on total GIES runtime in Figure~\ref{fig:scalability}.
The small non-monotonic variation at lower sample sizes is attributable to changes in the estimated scores, which will alter the greedy search trajectory and its number of steps, rather than to a systematic reduction in computational cost.

\section{Diagnostic ILS Trace on the Real Data Experiment}
\label{ILS}

To investigate the observed mismatch between score improvement and graph recovery as the ILS budget increases, we conducted an additional diagnostic analysis on the Chamber dataset from the real-data experiment (Section~\ref{causalchamber}).
This analysis follows exactly the same experimental setting as the original real-data evaluation and examines how the optimized total BIC score and the corresponding SHD change as the ILS budget increases.

\begin{figure}[t]
	\centering
	\begin{minipage}[t]{0.49\linewidth}
		\centering
		\includegraphics[width=\linewidth]{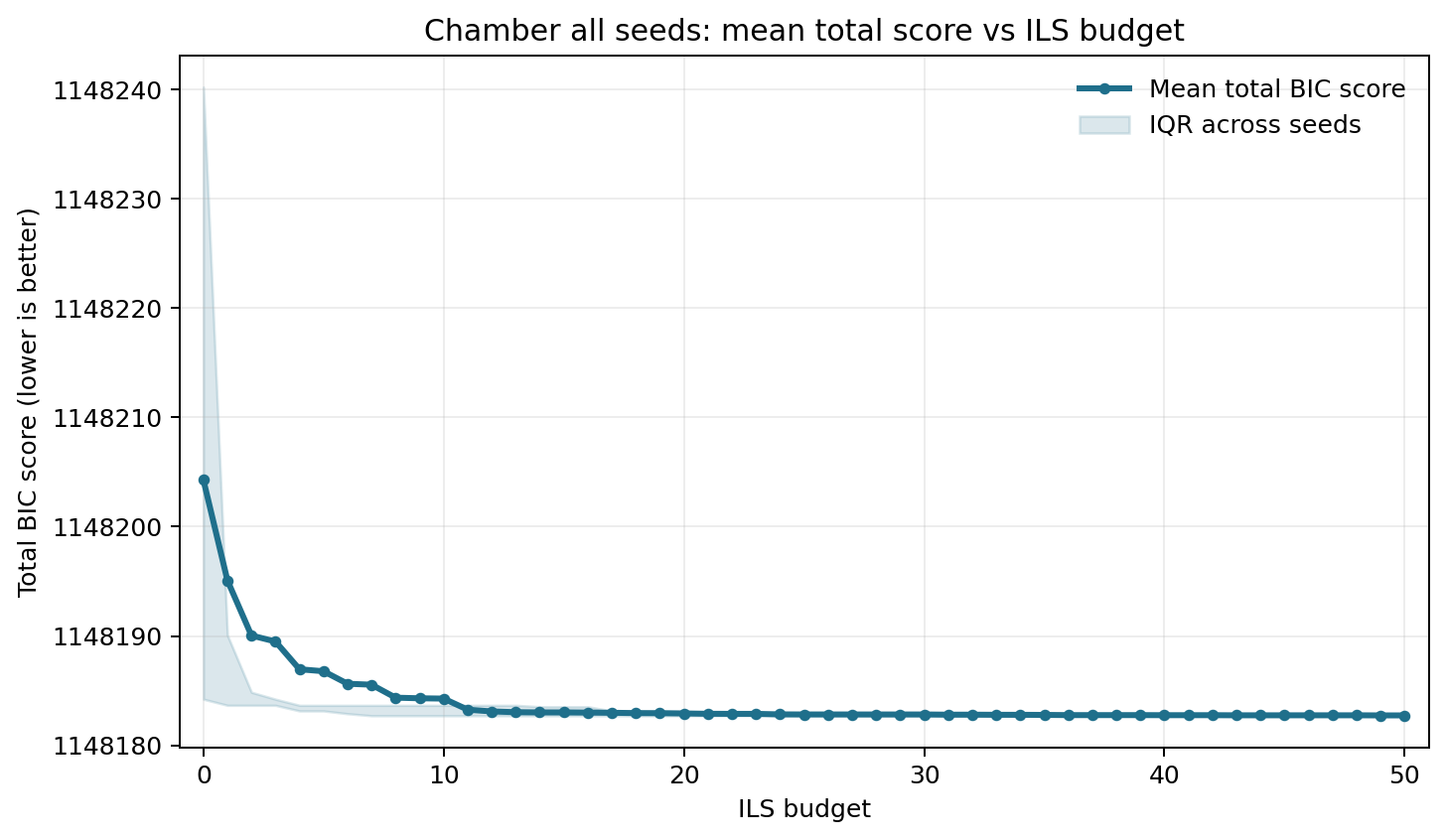}
	\end{minipage}
	\hfill
	\begin{minipage}[t]{0.49\linewidth}
		\centering
		\includegraphics[width=\linewidth]{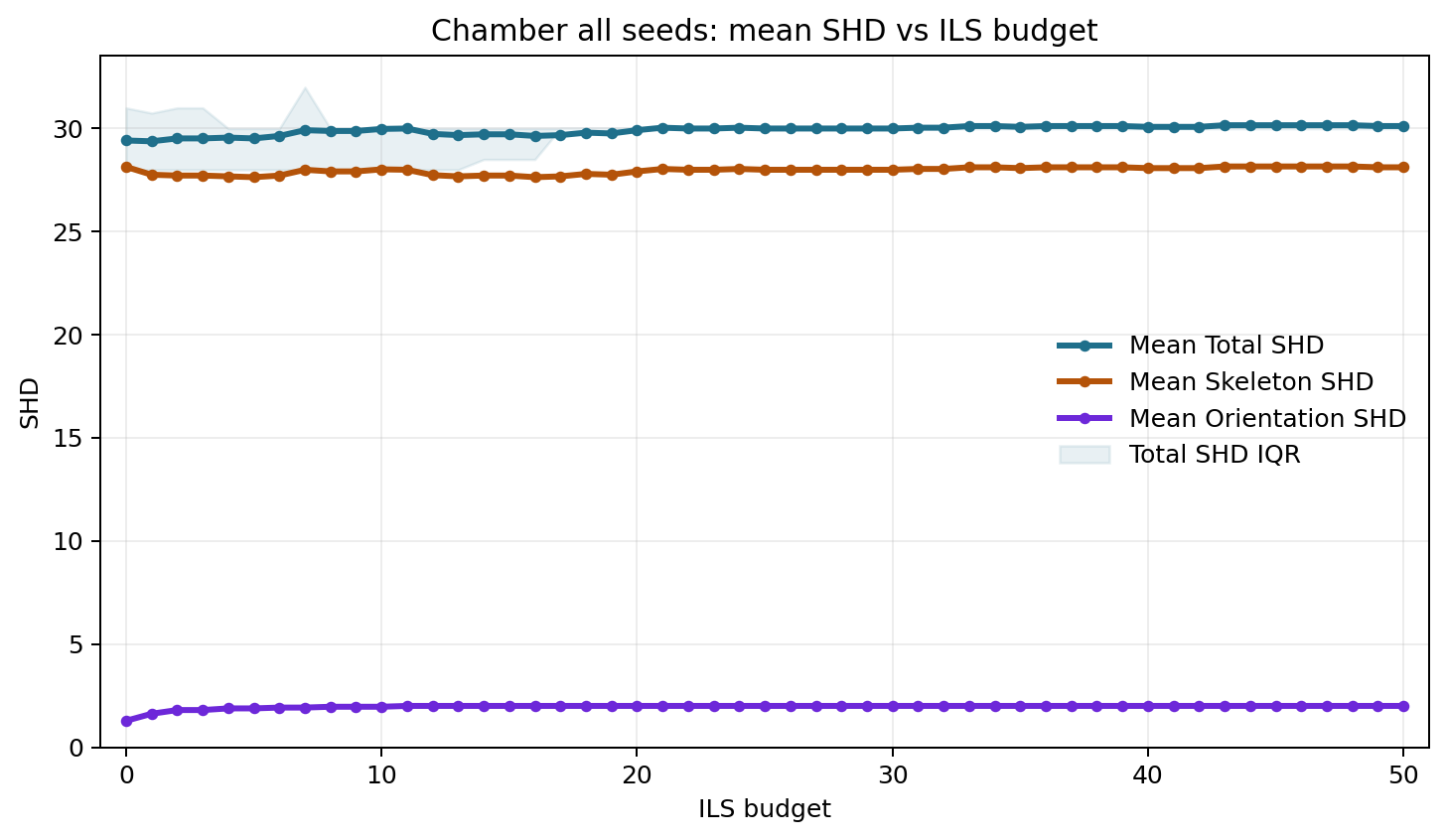}
	\end{minipage}
	\caption{\textbf{Left:} Mean optimized total BIC score across 50 seeds as the ILS budget increases from 0 to 50.
		\textbf{Right:} Mean total SHD and its skeleton and orientation components across the same seeds.
		The shaded regions denote the interquartile range across seeds.}
	\label{fig:chamber_ils_diagnostic}
\end{figure}

As shown in Figure~\ref{fig:chamber_ils_diagnostic} (left), the mean total score decreases rapidly during the first several ILS steps and then gradually stabilizes.
From ILS 0 to ILS 50, the mean total BIC score decreases by approximately 21.55 units, and 49 of the 50 seeds achieve a lower score at ILS 50 than at ILS 0.
However, this improvement in the optimized total score does not translate into better graph recovery.
Figure~\ref{fig:chamber_ils_diagnostic} (right) shows that the mean total SHD increases from 29.42 at ILS 0 to 30.12 at ILS 50.
This increase is almost entirely attributable to the orientation component: the mean skeleton SHD remains essentially unchanged, decreasing slightly from 28.14 to 28.12, whereas the mean orientation SHD increases from 1.28 to 2.00.

\begin{figure}[t]
	\centering
	\includegraphics[width=0.7\linewidth]{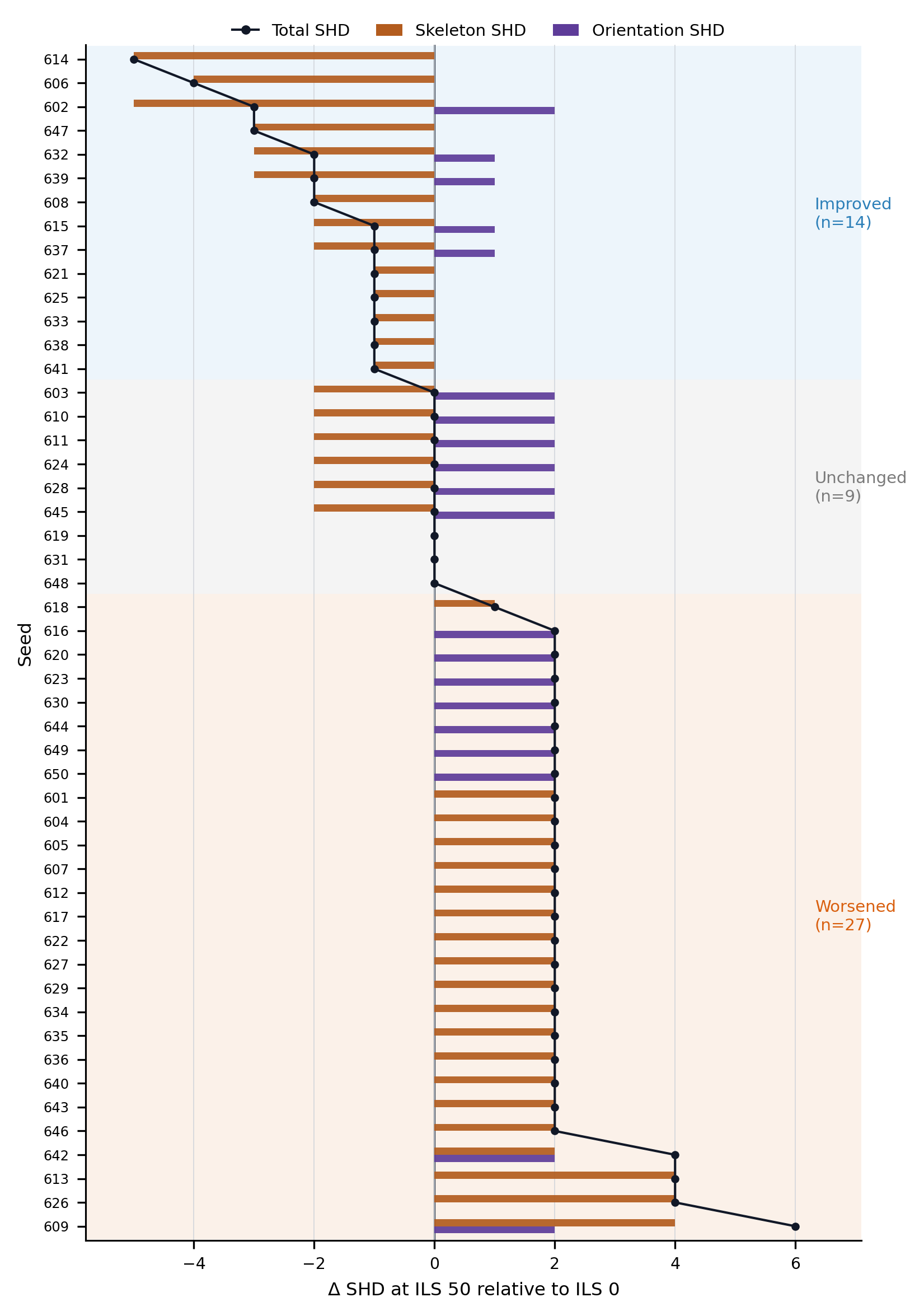}
	\caption{Seed-level changes in SHD on the real data dataset when increasing the ILS budget from 0 to 50.
		The horizontal axis reports the change in SHD, computed as the value at ILS 50 minus that at ILS 0.
		Seeds are grouped according to the change in total SHD: improved ($n=14$), unchanged ($n=9$), and worsened ($n=27$).
		The black line denotes the change in total SHD, while the orange and purple bars denote the changes in skeleton SHD and orientation SHD, respectively.}
	\label{fig:chamber_ils_seed_level_shd}
\end{figure}

However, the aggregate results mask substantial heterogeneity across seeds.
As shown in Figure~\ref{fig:chamber_ils_seed_level_shd}, comparing ILS 0 with ILS 50, total SHD worsens for 27 seeds, improves for 14 seeds, and remains unchanged for 9 seeds.
Skeleton SHD increases for 20 seeds and decreases for 20 seeds, whereas orientation SHD increases for 20 seeds and decreases for none.
Although skeleton changes constitute the larger component-wise changes for 33 seeds, they occur in both directions and therefore largely cancel in the aggregate mean.
By contrast, orientation changes are one-sided and consequently account for the net increase in mean total SHD.
Thus, in the real data experiment, increasing the ILS budget generally yields lower-scoring solutions, but these solutions are not necessarily closer to the reference graph under SHD.

This score–recovery mismatch is plausibly related to a mismatch between the assumptions of the I-FLOP algorithm and the data-generating process of the Causal Chamber dataset.
In the synthetic experiments, the data are generated from a linear-Gaussian SEM, and interventions are implemented as Gaussian shifts on target variables while preserving the same linear mechanisms for the remaining variables.
However, Causal Chamber is obtained from a real optical system rather than from this synthetic SEM.
Its intervention environments are created by changing the operating ranges of physical actuators to bounded intervention-specific ranges.
These interventions therefore alter the support and operating regime of the intervened variables, rather than merely applying Gaussian mean shifts within the same assumed model.

These differences affect the interpretation of the optimized score.
The I-FLOP score evaluates candidate graphs through linear-Gaussian regressions pooled across the relevant observational and interventional environments.
Consequently, a lower total BIC score on the real data experiment represents improved fit under the assumed linear-Gaussian scoring approximation, rather than direct evidence of improved recovery of the physical reference graph.
The ILS results suggest that increasing the search budget improves the optimized score, while the corresponding solutions retain essentially unchanged skeleton accuracy and accumulate additional orientation errors.

\end{document}